\newcommand{\jasa}{0}
\newcommand{\lenghtfig}{12}
\numberwithin{theorem}{section}
\begin{document}

\title{Forecasting time series with constraints}

\author{%
 \name Nathan Doumèche \email nathan.doumeche@sorbonne-universite.fr \\
 \addr Sorbonne University, EDF R\&D
 \AND
 \name Francis Bach \email francis.bach@inria.fr \\
 \addr Inria, ENS, PSL Research University
 \AND
 \name Eloi Bedek \email eloi.bedek@edf.fr \\
 \addr EDF R\&D
 \AND
 \name Gérard Biau \email gerard.biau@sorbonne-universite.fr \\
 \addr Sorbonne University, IUF
 \AND
 \name Claire Boyer \email claire.boyer@universite-paris-saclay.fr \\
 \addr Université Paris-Saclay, IUF
 \AND
 \name Yannig Goude \email yannig.goude@edf.fr \\
 \addr EDF R\&D, Paris-Saclay University
}

\editor{---}
\maketitle

\begin{abstract}
    Time series forecasting presents unique challenges that limit the effectiveness of traditional machine learning algorithms. To address these limitations, various approaches have incorporated linear constraints into learning algorithms, such as generalized additive models and hierarchical forecasting. In this paper, we propose a unified framework for integrating and combining linear constraints in time series forecasting. Within this framework, we show that the exact minimizer of the constrained empirical risk can be computed efficiently using linear algebra alone. This approach allows for highly scalable implementations optimized for GPUs. We validate the proposed methodology through extensive benchmarking on real-world tasks, including electricity demand forecasting and tourism forecasting, achieving state-of-the-art performance.   
\end{abstract}

\begin{keywords}
  Physics-informed machine learning, constraints, hierarchical forecasting, transfer learning, load forecasting, tourism forecasting
\end{keywords}

\section{Introduction}

\paragraph{Time series forecasting.} Time series data are used extensively in many contemporary applications, such as forecasting supply and demand, pricing, macroeconomic indicators, weather, air quality, traffic, migration, and epidemic trends 
\citep{petropoulos2022forecasting}. 
However, regardless of the application domain, forecasting time series presents unique challenges due to inherent data characteristics such as observation correlations, non-stationarity, irregular sampling intervals, and missing values. These challenges limit the availability of relevant data and make it difficult for complex black-box or overparameterized learning architectures to perform effectively, even with rich historical data 
\citep{lim2021time}. 

\paragraph{Constraints in time series.} In this context, many modern frameworks incorporate physical constraints to improve the performance and interpretability of forecasting models. The strongest form of such constraints are typically derived from fundamental physical properties of the time series data and are represented by systems of differential equations. For example, weather forecasting often relies on solutions to the Navier-Stokes equations \citep[][]{schultz2021can}.
In addition to defining physical relationships, differential constraints can also serve as regularization mechanisms. For example, spatiotemporal regression on graphs can involve penalizing the spatial Laplacian of the regression function to enforce smoothness across spatial dimensions \citep[][]{jin2024spatio}.

However, time series rarely satisfy strict differential constraints, often adhering instead to more relaxed forms of constraints \citep[][]{coletta2023on}.
Perhaps the most successful example of such weak constraints are the generalized additive models \citep[GAMs,][]{hastie1986generalized}, which have been applied to time series forecasting in epidemiology \citep{wood2017generalized}, earth sciences \citep{augusting2009modeling}, and energy forecasting \citep{fasiolo2021fast}. 
GAMs model the target time series (or some parameters of its distribution) as a sum of nonlinear effects of the features, thereby constraining the shape of the regression function.
Another example of weak constraint appears in the context of spatiotemporal time series with hierarchical forecasting. Here, the goal is to combine regional forecasts into a global forecast by enforcing that the global forecast must be equal to the sum of the regional forecasts \citep{Wickramasuriya2019optimal}.
Although this may seem like a simple constraint, hierarchical forecasting is challenging because of a trade-off: using more granular regional data increases the available information, but also introduces more noise as compared to the aggregated total. Another common and powerful constraint in time series forecasting arises when combining multiple forecasts \citep{gaillard2014second}. 
This is done by creating a final forecast by weighting each of the initial forecasts, with the constraint that the sum of the weights must equal one.

\paragraph{PIML and time series.} Although  weak constraints have been studied individually and applied to real-world data, a unified and efficient approach is still lacking.
It is important here to mention physics-informed machine learning (PIML), which offers a promising way to integrate constraints into machine learning models. 
Based on the foundational work of \citet{raissi2019PINN}, PIML exploits the idea that constraints can be applied with neural networks and optimized by backpropagation, leading to the development of physics-informed neural networks (PINNs). 
PINNs have been successfully used to predict time series governed by partial differential equations (PDEs) in areas such as weather modeling \citep{kashinath2021physics}, and stiff chemical reactions \citep{ji2021stiff}. 
Weak constraints on the shape of the regression function have also been modeled with PINNs \citep[][]{daw2022lake}. 
However, PINNs often suffer from optimization instabilities and overfitting  \citep{doumeche2023convergence}.  
As a result, alternative methods have been developed for certain differential constraints that offer improved optimization properties over PINNs. For example, data assimilation techniques in weather forecasting have been shown to be consistent with the Navier-Stokes equations \citep{nickl2024on}. 
Moreover, \citet{doumeche2024physicsinformed} showed that forecasting with linear differential constraints can be formulated as a kernel method, yielding closed-form solutions to compute the exact empirical risk minimum. An additional advantage of this kernel modeling is that the learning algorithm can be executed on GPUs, leading to significant speedups compared to the gradient-descent-based optimization of PINNs \citep{doumèche2024physicsinformedkernellearning}.

\paragraph{Contributions.}
In this paper, we present a principled approach to effectively integrate constraints into time series forecasting. Each constrained problem is reformulated as the minimization of an empirical risk consisting of two key components: a data-driven term and a regularization term that enforces the smoothness of the function and the desired physical constraints.
For nonlinear regression tasks, we rely on a Fourier expansion.
Our framework allows for efficient computation of the exact minimizer of the empirical risk, which is easily optimized on GPUs for scalability and performance.

In Section~\ref{sec:weak}, we introduce a unified mathematical framework that connects empirical risks constrained by various forms of physical information. Notably, we highlight the importance of distinguishing between two categories of constraints: shape constraints, which limit the set of admissible functions, and learning constraints, which introduce an initial bias during parameter optimization. In Section~\ref{sec:shape}, we explore shape constraints and illustrate their relevance using the example of electricity demand forecasting.
In Section~\ref{sec:weight}, we define learning constraints and show how they can be applied to tourism forecasting.
This common modeling framework for shape and learning constraints allows for efficient integration of multiple constraints, as illustrated by the WeaKL-T in Section~\ref{sec:weight}, which combines hierarchical forecasting with additive models and transfer learning. 
Each empirical risk can then be minimized on a GPU using linear algebra, ensuring scalability and computational efficiency. 
This direct computation guarantees that the proposed estimator exactly minimizes the empirical risk, preventing convergence to potential local minima---a common limitation of modern iterative and gradient descent methods used in PINNs.
Our method achieves significant performance improvements over state-of-the-art approaches. 
\if\jasa0
{
The code for the numerical experiments and implementation is publicly available at \url{https://github.com/NathanDoumeche/WeaKL}.
}\fi

\section{Incorporating constraints in time series forecasting}
\label{sec:weak}
Throughout the paper, we assume that $n$ observations $(X_{t_1}, Y_{t_1}), \ldots, (X_{t_n}, Y_{t_n})$ are drawn on $\mathbb{R}^{d_1} \times \mathbb{R}^{d_2}$. 
The indices $t_1, \ldots, t_n \in T$ correspond to the times at which an unknown stochastic process $(X,Y):=(X_t, Y_t)_{t \in T}$ is sampled.
Note that, all along the paper, the time steps need not be regularly sampled on the index set $T \subseteq \mathbb R$. 
We focus on supervised learning tasks that aim to estimate an unknown function $f^\star : \mathbb{R}^{d_1} \to \mathbb{R}^{d_2}$, under the assumption that $Y_t = f^\star(X_t) + \varepsilon_t$, where $\varepsilon$ is a random noise term. Without loss of generality, upon rescaling, we assume that $X_t:= (X_{1,t}, \hdots, X_{d_1,t}) \in [-\pi, \pi]^{d_1}$ and $-\pi \leq t_1 \leq  \cdots \leq  t_{n+1}\leq \pi$. The goal is to construct an estimator $\hat{f}$ for $f^\star$. 

A simple example to to keep in mind is when $Y$ is a stationary, regularly sampled time series with $t_j = j/n$, and the lagged value $X_j = Y_{t_{j-1}}$ serves as the only feature. In this specific case, where $d_1 = d_2$, the model simplifies to
$Y_t = f^\star(Y_{t-1/n})+\varepsilon_t$. Thus, the regression setting reduces to an autoregressive model. Of course, we will consider more complex models that go beyond this simple case.

\paragraph{Model parameterization.} We consider parameterized models of the form 
\begin{equation}
    f_{\theta}(X_t) = (f^1_{\theta}(X_t), \hdots, f^{d_2}_{\theta}(X_t)) =  (\langle \phi_1(X_t), \theta_1\rangle, \hdots, \langle \phi_{d_2}(X_t), \theta_{d_2}\rangle),
    \label{eq:model_def}
\end{equation} 
where each component  $f^\ell_\theta(X_t)$ is computed as the inner product of a feature map $\phi_\ell(X_t) \in \mathbb{C}^{D_\ell}$, with $D_\ell \in \mathbb N^\star$, and a vector $\theta_\ell \in \mathbb{C}^{D_\ell}$. 
The parameter vector $\theta \in \mathbb C^{D_1 + \cdots + D_{d_2}}$ of the model is defined as the concatenation of $\theta_1$, $\dots$, $\theta_{d_2}$. 
Note that $f_\theta$ is uniquely determined by $\theta$ and the maps~$\phi_\ell$.
To simplify the notation, we write $\dim(\theta) = D_1 + \cdots + D_{d_2}$. 

Our goal is to learn a parameter $\hat \theta \in \mathbb C^{\dim(\theta)}$ such that $\hat Y_t  = f_{\hat{\theta}}(X_t)$ is an estimator of the target $Y_t$.
Equivalently, $f_{\hat{\theta}}$ is an estimator of the target function $f^\star$. 
To this end, the core principle of our approach is to consider $\hat \theta$ to be a minimizer over $\mathbb C^{\dim(\theta)}$ of an empirical risk of the form
\begin{equation}
    L(\theta) = \frac{1}{n}\sum_{j=1}^n \|\Lambda(f_\theta(X_{t_j})-Y_{t_j})\|_2^2  + \|M\theta\|_2^2,
    \label{eq:risk}
\end{equation}
where $\Lambda$ and $M$ are complex-valued matrices with problem-dependent dimensions, which are not necessarily square. The matrix $M$ encodes a regularization penalty, which may include hyperparameters to be tuned through validation, as we will see in several examples.

\paragraph{Explicit formula for the empirical risk minimizer: WeaKL.}The following proposition shows how to compute the exact minimizer of \eqref{eq:risk}. 
(Throughout the document, $\ast$ denotes the conjugate transpose operation.)
\begin{proposition}[Empirical risk minimizer.] 
\label{prop:emp_risk_min}
Suppose both $M$ and $\Lambda$ are injective.
Then, there is a unique minimizer to \eqref{eq:risk}, which takes the form
\begin{equation}
    \hat \theta = \Big( \Big( \sum_{j=1}^n \mathbb \Phi_{t_j}^\ast \Lambda^\ast \Lambda\mathbb \Phi_{t_j}\Big) + n M^\ast M\Big)^{-1} \sum_{j=1}^n \mathbb \Phi_{t_j}^\ast \Lambda^\ast \Lambda Y_{t_j},
    \label{eq:weakl}
\end{equation}
where $\mathbb \Phi_t$ is the $d_2\times \dim(\theta)$ block-wise diagonal feature matrix at time $t$, defined by
\begin{equation}
\mathbb \Phi_t = \begin{pmatrix}
    \phi_1(X_{t})^\ast & 0& 0 \\
    0 & \ddots & 0 \\
    0 & 0 & \phi_{d_2}(X_{t})^\ast
\end{pmatrix}
\label{eq:feature_matrix}.
\end{equation}
\end{proposition}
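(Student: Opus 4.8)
The plan is to recognize $L$ as a real-valued quadratic form in the complex parameter $\theta$ and then minimize it by completing the square, which neatly sidesteps any subtlety arising from differentiating a non-holomorphic function of a complex variable.

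First I would rewrite the model in matrix form. By the definition of the block-diagonal feature matrix in \eqref{eq:feature_matrix}, together with the convention $\langle \phi_\ell(X_t),\theta_\ell\rangle = \phi_\ell(X_t)^\ast \theta_\ell$ that is forced by that definition, the $\ell$-th coordinate of $\mathbb{\Phi}_{t}\theta$ is exactly $\phi_\ell(X_t)^\ast \theta_\ell = f^\ell_\theta(X_t)$, so that $f_\theta(X_{t_j}) = \mathbb{\Phi}_{t_j}\theta$. Substituting into \eqref{eq:risk} gives
\[
L(\theta) = \frac1n \sum_{j=1}^n (\mathbb{\Phi}_{t_j}\theta - Y_{t_j})^\ast \Lambda^\ast\Lambda(\mathbb{\Phi}_{t_j}\theta - Y_{t_j}) + \theta^\ast M^\ast M\theta.
\]
Expanding the quadratic and collecting terms, I would write $L(\theta) = \theta^\ast A\theta - \theta^\ast b - b^\ast\theta + c$, where
\[
A = \frac1n\sum_{j=1}^n \mathbb{\Phi}_{t_j}^\ast \Lambda^\ast\Lambda\mathbb{\Phi}_{t_j} + M^\ast M, \qquad b = \frac1n\sum_{j=1}^n \mathbb{\Phi}_{t_j}^\ast\Lambda^\ast\Lambda Y_{t_j},
\]
and $c = \frac1n\sum_{j=1}^n Y_{t_j}^\ast\Lambda^\ast\Lambda Y_{t_j}$ is a constant independent of $\theta$. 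Observe that $A$ is Hermitian, being a sum of Hermitian positive semidefinite matrices.

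The crux of the argument is to show that $A$ is in fact positive definite, hence invertible. For any $\theta \neq 0$,
\[
\theta^\ast A\theta = \frac1n\sum_{j=1}^n \|\Lambda\mathbb{\Phi}_{t_j}\theta\|_2^2 + \|M\theta\|_2^2 \geq \|M\theta\|_2^2 > 0,
\]
where the strict inequality uses the injectivity of $M$. I expect this to be the main step to get right: the injectivity of $M$ is precisely what guarantees strict convexity and hence both existence and uniqueness of the minimizer, whereas the injectivity of $\Lambda$ plays the secondary role of making the data-fidelity term a genuine weighted norm on the residuals.

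With $A$ shown to be positive definite, I would conclude by completing the square:
\[
L(\theta) = (\theta - A^{-1}b)^\ast A (\theta - A^{-1}b) + c - b^\ast A^{-1}b.
\]
Since $A$ is positive definite, the first term is nonnegative and vanishes exactly when $\theta = A^{-1}b$; therefore the minimizer exists, is unique, and equals $\hat\theta = A^{-1}b$. It then only remains to match this with the stated formula: clearing the common factor $1/n$ from both $A$ and $b$ turns $A^{-1}b$ into $\big(\sum_{j=1}^n \mathbb{\Phi}_{t_j}^\ast\Lambda^\ast\Lambda\mathbb{\Phi}_{t_j} + nM^\ast M\big)^{-1}\sum_{j=1}^n \mathbb{\Phi}_{t_j}^\ast\Lambda^\ast\Lambda Y_{t_j}$, which is exactly \eqref{eq:weakl}.
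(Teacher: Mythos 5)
Your proof is correct, and it takes a genuinely different route from the paper's. The paper proceeds by calculus: it expands $L(\theta+\delta\theta)$ to identify the differential $dL_\theta(\delta\theta) = 2\,\mathrm{Re}(\langle \tilde M\theta - \tilde Y, \delta\theta\rangle)$ (with $\tilde M$ equal to your $A$ and $\tilde Y$ equal to your $b$), shows the unique critical point solves $\tilde M\theta = \tilde Y$, invokes a separate lemma to invert $\tilde M$, and then needs an additional coercivity bound $L(\theta)\geq \lambda_{\min}(\tilde M)\|\theta\|_2^2/2$ for large $\|\theta\|_2$ together with continuity to guarantee that a global minimizer exists at all. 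Your completing-the-square argument is purely algebraic: once $A$ is shown positive definite (and both proofs hinge on exactly the same key fact, namely that injectivity of $M$ makes $\theta^\ast A\theta \geq \|M\theta\|_2^2 > 0$ for $\theta\neq 0$), existence and uniqueness of the minimizer fall out simultaneously from $L(\theta) = (\theta - A^{-1}b)^\ast A(\theta - A^{-1}b) + \mathrm{const}$, with no differentiation, no critical-point analysis, and no coercivity step. This sidesteps the subtlety the paper must handle---that $L$ is a real-valued, non-holomorphic function of a complex variable, so its ``derivative'' must be phrased via real parts---which is precisely the subtlety you flagged at the outset. Your side remark is also accurate: in both proofs only the injectivity of $M$ is actually used; the injectivity of $\Lambda$ assumed in the statement plays no role in establishing existence, uniqueness, or the formula.
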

This result, proven in Appendix~\ref{proof:kernel}, generalizes well-known results on kernel ridge regression \citep[see, e.g.,][Equation 10.17]{mehri2012foundations}. 
In the rest of the paper, we refer to the estimator $\hat \theta$ as the weak kernel learner (WeaKL). The strength of WeaKL lies in its exact computation via~\eqref{eq:weakl}. Unlike current implementations of GAMs and PINNs, WeaKL is free from optimization errors. Furthermore, since WeaKL relies solely on linear algebra, it can take advantage of GPU programming to accelerate the learning process. 
This efficiency enables effective hyperparameter optimization, as demonstrated in Section~\ref{sec:energy_crisis} through applications to electricity demand forecasting.

\paragraph{Algorithmic complexity.} The formula \eqref{eq:weakl} used in this article to minimize the empirical risk \eqref{eq:risk} can be implemented with a  complexity of $ O(\dim(\theta)^3 +  \dim(\theta)^2 n)$. 
Note that the dimensions $d_1$ and $d_2$ of the problem only impact the complexity of WeaKL through $\dim(\theta) = D_1 + \cdots + D_{d_2}$. 
By construction, $\dim(\theta) \geq d_2$, but the influence of $d_1$ is more subtle and depends on the chosen dimension $D_\ell$ of the maps $\phi_j: [-\pi, \pi]^{d_1}\to \mathbb{C}^{D_j}$. 
In particular, if all the maps have the same dimension, i.e., $D_\ell = D$, then $\dim(\theta) = Dd_2$.

Notably, this implementation runs in less than ten seconds on a standard GPU (e.g., an NVIDIA $L4$ with $24$ GB of RAM) when $\dim(\theta) \leq 10^3$ and $n \leq 10^5$. 
We believe that this framework is particularly well suited for time series, where data sampling is often costly, thus limiting both $n$ and $d_2$. Moreover, in many cases, the distribution of the target time series changes significantly over time, making only the most recent observations relevant for forecasting. This further limits the size of $n$. For example, in the Monash time series forecasting archive \citep{Godahewa2021Monash}, $19$ out of $30$ time series have $d_2 \leq 10^3$ and $n \leq 10^5$. 
However, there are relevant time series where either the dimension $d_2$ or the number of data points $n$ is large. 
In such cases, finding an exact minimizer of the empirical risk \eqref{eq:risk} becomes very computationally expensive. 
Efficient techniques have been developed to approximate the minimizer of \eqref{eq:risk} in these regimes \citep[see, e.g.,][]{meanti2020kernel}, but a detailed discussion of these methods is beyond the scope of this paper.

\paragraph{Some important examples.}
Let us illustrate the mechanism with two fundamental examples. Of course, the case where $\phi_\ell(x) = x$ and where $\Lambda$ and $M$ are identity matrices corresponds to the well-known ridge linear regression. 
On the other hand, a powerful example of a nonparametric regression map is the Fourier map, defined as $\phi_\ell(x) = (\exp(i \langle x, k \rangle / 2))_{\|k\|_\infty \leq m}^\top = (\exp(i \langle x, k \rangle / 2))_{-m\leq k_1, \hdots, k_{d_1} \leq m}^\top$, where the Fourier frequencies are truncated at $m \geq 0$. 
This map leverages the expressiveness of the Fourier basis to capture complex patterns in the data. Thus, for the $\ell$-th component of $f_{\theta}$, we consider the Fourier decomposition
\[
f^\ell_{ \theta}(x) =  \sum_{\|k\|_\infty \leq m}  \theta_{\ell,k} \exp(-i \langle x, k\rangle/2),
\]
which can approximate any function in $L^2([-\pi, \pi]^{d_1}, \mathbb{R})$ as $m \to \infty$. In this example, we have $\theta_{\ell}=(\theta_{\ell,k})_{\|k\|_\infty \leq m}^\top \in \mathbb C^{(2m+1)^d}$. 
Next, for $s \in \mathbb N^\star$, 
let $M$ be the $(2m+1)^{d_1}\times (2m+1)^{d_1}$ positive diagonal matrix such that
\[
\|M \theta_\ell\|_2^2  = \lambda \sum_{\|k\|_{\infty} \leq m} \theta_{\ell,k}^2 (1+\|k\|_2^{2s}),
\]
where $\lambda > 0$ is an hyperparameter.
Then, $\|M \theta_\ell\|_2$
is a Sobolev norm on the derivatives up to order $s$ of $f_{\theta_\ell}$.
When $\lambda = 1$, we will denote this norm by $\|f_{\theta}^\ell\|_{H^s}$. 
This approach regularizes the smoothness of $f_{\hat{\theta}}^{\ell}$, encouraging the recovery of smooth solutions. 
Moreover, choosing $\Lambda$ as the identity matrix and $\lambda = n^{-2s/(2s+d_1)}$ achieves the Sobolev minimax rate $\mathbb E(\|f_{\hat \theta}^\ell(X) -Y_\ell\|_2^2) = O(n^{-2s/(2s+d_1)})$ \citep{blanchard2020kernel}. 
This result justifies why the Fourier decomposition serves as an effective nonparametric mapping. 

These fundamental examples illustrate the richness of the approach, making it possible to incorporate constraints into models of chosen complexity, from very light models like linear regression, up to nonparametric models such as Fourier maps.

\paragraph{Classification of the constraints.} In order to clarify our discussion as much as possible, we find it helpful, after a thorough analysis of the existing literature, to consider two families of constraints. This distinction arises from the need to address two fundamentally different aspects of the forecasting problem.
\begin{enumerate}
\item {\bf Shape constraints}, described in Section~\ref{sec:shape}, include additive models, online adaption after a break, and forecast combinations (detailed in Appendix~\ref{sec:combination}). In these models, prior information is incorporated by selecting custom maps $\phi_\ell$. The set of admissible models  $f_\theta$ is thus restricted by shaping the structure of the function space through this choice of maps. Here, the matrix $M$ serves only as a regularization term, while $\Lambda$ is the identity matrix.

\item {\bf Learning constraints}, described in Section~\ref{sec:weight}, include transfer learning, hierarchical forecasting, and differential constraints (detailed in Appendix~\ref{sec:diff}). In these models, prior information or constraints are incorporated through the matrices $M$ and $\Lambda$. The goal is to increase the efficiency of parameter learning by introducing additional regularization.
\end{enumerate}
It is worth noting, however, that certain specific shape constraints cannot be penalized by a kernel norm, such as those in isotonic regression. In the conclusion, we discuss possible extensions to account for such constraints.

\section{Shape constraints}
\label{sec:shape}
\subsection{Mathematical formulation}
In this section, we introduce relevant feature maps $\phi$ that incorporate prior knowledge about the shape of the function  $f^\star:[-\pi,\pi]^{d_1}\to \mathbb{C}^{d_2}$. To simplify the notation, we focus on the one-dimensional case where $d_2 = 1$ and $\Lambda = 1$. 
This simplification comes without loss of generality, since the feature maps developed in this section can be applied directly to \eqref{eq:model_def}.

As a result, the model reduces to $f_{\theta}(X_t) = \langle \phi_1(X_t), \theta_1 \rangle$, and \eqref{eq:weakl} simplifies to
\begin{equation}
    \hat \theta = ( \mathbb \Phi^\ast \mathbb \Phi + n M^\ast M)^{-1}  \mathbb \Phi^\ast \mathbb Y,
    \label{eq:weakl2}
\end{equation}
where $\mathbb Y = (Y_{t_1}, \hdots, Y_{t_n})^\top \in \mathbb R^n$ and the $n\times \dim(\theta)$ matrix $\mathbb \Phi$ takes the form 
\[ \mathbb \Phi = (\phi_1(X_{t_1})\mid \cdots \mid \phi_1(X_{t_n}))^\ast.
\]
Note that $\mathbb \Phi$ is the classical feature matrix, and that it is related to the matrix $\mathbb \Phi_t$ of \eqref{eq:feature_matrix} by $\mathbb \Phi^\ast \mathbb \Phi = \sum_{j=1}^n\mathbb \Phi_{t_j}^\ast \mathbb \Phi_{t_j} = \sum_{j=1}^n \phi_1(X_{t_j}) \phi_1(X_{t_j})^\ast$.

\paragraph{Additive model: Additive WeaKL.} The additive model constraint assumes that $f^\star(x_1, \hdots, x_{d_1}) = \sum_{\ell=1}^{d_1} g_\ell^\star(x_\ell)$, where $g_\ell^\star: \mathbb{R} \to \mathbb{R}$ are univariate functions. This constraint is widely used in data science, both in classical statistical models \citep{hastie1986generalized} and in modern neural network architectures \citep{agarwal2021neural}. Indeed, additive models are interpretable because the effect of each feature $x_\ell$ is captured by its corresponding function $g_\ell^\star$. In addition, univariate effects are easier to estimate than multivariate effects \citep{Ravikumar2009sparse}. These properties allow the development of efficient variable selection methods \citep[see, for example,][]{marra2011practical}, similar to those used in linear regression.

In our framework, the additivity constraint directly translates into the model as
\[
f_{\theta}(X_t) =  \langle \phi_{1}(X_{t}), \theta_{1}\rangle = \langle \phi_{1,1}(X_{1,t}), \theta_{1,1}\rangle + \cdots + \langle \phi_{1,d_1}(X_{d_1,t}), \theta_{1,d_1}\rangle,
\]
where $\phi_1$ is the concatenation of the maps $\phi_{1,\ell}$, and $\theta_1$ is the concatenation of the vectors  $\theta_{1,\ell}$. 
Note that the maps $\phi_{1,\ell}$ and the vectors $\theta_{1, \ell}$ can be multidimensional, depending on the model.
In this formulation, the effect of each feature is modeled by the function $g_\ell(x_\ell) = \langle \phi_{1,\ell}(x_\ell), \theta_{1,\ell}\rangle$, which can be either linear or nonlinear in $x_\ell$.
The empirical risk then takes the form
\begin{equation}
    L(\theta) = \frac{1}{n} \sum_{j=1}^n |f_\theta(X_{t_j}) - Y_{t_j}|^2 + \sum_{\ell=1}^{d_1}\lambda_\ell\|M_\ell\theta_{1,\ell}\|_2^2, \label{eq:weaklGAM}
\end{equation}
where $\lambda_\ell >0$ are hyperparameters and $M_\ell$ are regularization matrices.
There are three types of effects that can be taken into account:
\begin{itemize}
    \item[$(i)$] A linear effect is obtained by setting $\phi_{1,\ell}(x_\ell) = x_\ell \in \mathbb R$. 
    To regularize the parameter $\theta_{1, \ell}$, we set $M_\ell = 1$. This corresponds to a ridge penalty.
    \item[$(ii)$] A nonlinear effect can be modeled using the Fourier map $\phi_{1,\ell}(x_\ell) = (\exp(i  k x_\ell  / 2))_{-m\leq k \leq m}^\top$. 
    To regularize the parameter $\theta_{1, \ell}$, we set $M_\ell$ to be the $(2m+1)\times (2m+1)$ diagonal matrix defined by $M_\ell =\mathrm{Diag}((\sqrt{1+k^{2s}})_{-m\leq k\leq m})$, penalizing the Sobolev norm. 
    A common choice for the smoothing parameter $s$, as used in GAMs, is $s = 2$ \citep[see, e.g.,][]{wood2017generalizedbook}.
    \item[$(iii)$] If $x_\ell$ is a categorical feature, i.e., $x_\ell$ takes values in a finite set $E$, we can define a bijection $\psi: E \to \{1, \hdots, |E|\}$. The effect of $x_\ell$ can then be modeled as $g_\ell(x_\ell) = \langle \phi_{1,\ell}(x_\ell), \theta_1 \rangle$, where $\phi_\ell = \phi \circ \psi$ and $\phi$ is the Fourier map with $m = \lfloor |E|/2 \rfloor$. To regularize the parameter $\theta_{1, \ell}$, we set $M_\ell$ as the identity matrix, which corresponds to applying a ridge penalty.
\end{itemize}
Overall, similar to GAMs, WeaKL can be optimized to fit additive models with both linear and nonlinear effects. The parameter $\hat \theta$ of the WeaKL can then be computed using \eqref{eq:weakl2}
with 
\[M = \begin{pmatrix}
        \sqrt{\lambda_1} M_1& 0  & 0\\
        0 & \ddots&  0\\
        0 & 0& \sqrt{\lambda_{d_1}} M_{d_1}
    \end{pmatrix}.\]
To stress that this WeaKL results from the enforcement of additive constraints, we call it the \textit{additive WeaKL}.
Note that, contrary to GAMs where identifiability issues must be addressed \citep{wood2017generalizedbook}, WeaKL does not require further regularization, since $\hat \theta$ is the unique minimizer of the empirical risk~$L$. 
Note that the hyperparameters $\lambda_\ell$, along with the number $m$ of Fourier modes and the choice of feature maps  $\phi_\ell$, can be determined by model selection, as described in Appendix~\ref{sec:tuning}.

\paragraph{Online adaption after a break: Online WeaKL.}
For many time series, the dependence of $Y$ on $X$ can vary over time. For example, the behavior of $Y$ may change rapidly following extreme events, resulting in structural breaks. A notable example is the shift in electricity demand during the COVID-19 lockdowns, as illustrated in use case $1$. To provide a clear mathematical framework, we assume that the distribution of $(X, Y)$ follows an additive model that evolves smoothly over time. Formally, considering $(t, X_t)$ as a feature vector, we assume that
\begin{equation}
    f^\star(t, x_1, \hdots, x_{d_1}) = h_0^\star(t)+ \sum_{\ell=1}^{d_1} (1+ h_\ell^\star(t))  g_\ell^\star(x_\ell),
    \label{eq:model}
\end{equation}
where $g_\ell^\star$ and $h_\ell^\star$ are univariate functions. This model forms the core of the Kalman-Viking algorithm \citep{vilmarest2024viking}, which has demonstrated state-of-the-art performance in forecasting electricity demand and renewable energy production \citep{obst2021adaptive, vilmarest2022state, Vilamarest2024adaptive}. 

We assume that we have at hand estimators $\hat g_\ell$ of $g_\ell^\star$ that we want to update over time. For example, these estimators can be obtained by fitting an additive WeaKL model, initially assuming $h_\ell^\star = 0$. The functions $h_\ell^\star$ are then estimated by minimizing the empirical risk
\begin{equation}
    L(\theta) = \frac{1}{n}\sum_{j=1}^n \Big|h_{\theta_0}(t_j) + \sum_{\ell=1}^{d_1} (1+h_{\theta_\ell}(t_j)) \hat g_\ell(X_{\ell,t_j})-Y_{t_j}\Big|^2 + \sum_{0\leq \ell \leq d_1} \lambda_\ell\|h_{\theta_\ell}\|_{H^s}^2,
    \label{eq:risk_online}
\end{equation}
where $\lambda_\ell > 0$ are hyperparameters regularizing the smoothness of the functions $h_{\theta_\ell}$. Here, $h_{\theta}(t) = \langle \phi(t), \theta\rangle$, and $\phi$ is the Fourier map $\phi(t) =(\exp(i k t/2))_{-m\leq k \leq m}^\top$. The prior $h_{\theta_\ell} \simeq 0$ reflects the idea that the best a priori estimate of $Y$'s behavior follows the stable additive model. Defining $W_t = Y_t - \sum_{\ell=1}^{d_1}\hat g_\ell(X_{\ell,t})$, the empirical risk can be reformulated as
\begin{equation*}
    L(\theta) = \frac{1}{n}\sum_{j=1}^n |\langle \phi_1(t_j, X_{t_j}), \theta\rangle - W_{t_j}|^2 + \|M \theta\|_2^2,
\end{equation*}
where
$\phi_1(t, X_t) = 
    ((\exp(ik t/2))_{- m\leq k \leq  m},
    (\hat g_\ell(X_{\ell,t})\exp(ik t/2))_{- m\leq k \leq  m})_{\ell=1}^{d_1})^\top \in \mathbb C^{(2m+1)(d_1+1)}$,
\[M = \begin{pmatrix}
        \sqrt{\lambda_0} D& 0  & 0\\
        0 & \ddots&  0\\
        0 & 0& \sqrt{\lambda_{d_1}} D
    \end{pmatrix},\]
and $D$ is the $(2m+1)\times (2m+1)$ diagonal matrix
$D =\mathrm{Diag}((\sqrt{1+k^{2s}})_{-m\leq k\leq m})$.
From \eqref{eq:weakl2}, we deduce that the unique minimizer of the empirical loss $L$ is
\begin{equation}
    \hat \theta  = ({\mathbb{\Phi}} ^\ast {\mathbb{\Phi}} + n  M^\ast M)^{-1}{\mathbb \Phi}^\ast  \mathbb W,
    \label{eq:online}
\end{equation}
where  $\mathbb W = (W_{t_1}, \hdots, W_{t_n})^\top \in \mathbb R^n$. 

This formulation allows to forecast the time series $Y$ at the next time step, $t_{n+1}$, using
\begin{align*}
\hat Y_{t_{n+1}} &= f_{\hat \theta}(t_{n+1}, X_{t_{n+1}}) = \langle \phi_1(t_{n+1}, X_{t_{n+1}}), \hat \theta\rangle \\
&=   h_{\hat \theta_0}(t_{n+1}) + \sum_{\ell=1}^{d_1} (1+h_{\hat \theta_\ell}(t_{n+1})) \hat g_\ell(X_{\ell,t_{n+1}}).
\end{align*}
Since the model is continuously updated over time, this corresponds to an online learning setting.
To emphasize that Equation~\eqref{eq:online} arises from an online adaptation process, we refer to this model as the \textit{online WeaKL}.
Unlike the Viking algorithm of \citet{vilmarest2024viking}, which approximates the minimizer of the empirical risk through an iterative process, online WeaKL offers a closed-form solution and exploits GPU parallelization for significant speedups.
As shown in Section~\ref{sec:energy_crisis}, our approach leads to improved performance in electricity demand forecasting.

\subsection{Application to electricity load forecasting}
\label{sec:energy_crisis}
In this subsection, we apply shape constraints to two use cases in electricity demand forecasting and demonstrate the effectiveness of our approach.
In these electricity demand forecasting problems, the focus is on short-term forecasting, with particular emphasis on the recent non-stationarities caused by the COVID-19 lockdowns and by the energy crisis.

\paragraph{Electricity load forecasting and non-stationarity.} Accurate demand forecasting is critical due to the costly nature of electricity storage, coupled with the need for supply to continuously match demand.  
Short-term load forecasting, especially for 24-hour horizons, is particularly valuable for making operational decisions in both the power industry and electricity markets.
Although the cost of forecasting errors is difficult to quantify, a $1\%$ reduction in error is estimated to save utilities several hundred thousand USD per gigawatt of peak demand \citep{hong2016probabilistic}. 
Recent events such as the COVID-19 shutdown have significantly affected electricity demand, highlighting the need for updated forecasting models  \citep{zarbakhsh2022human}.

\paragraph{Use case 1: Load forecasting during COVID.} In this first use case, we test the performance of our WeaKL on the IEEE DataPort Competition on Day-Ahead Electricity
Load Forecasting \citep{Farrokhabadi2022day}.
Here, the goal is to forecast the electricity demand of an unknown country during the period following the Covid-19 lockdown.
The winning model of this competition was the Viking model of Team~4 \citep{vilmarest2022state}, with a mean absolute error (MAE) of $10.9$ gigawatts (GW). For comparison, a direct translation of their model into the online WeaKL framework---using the same features and maintaining the same additive effects---results in an MAE of $10.5$ GW. In parallel, we also apply the online WeaKL methodology without relying on the variables selected by \citet{vilmarest2022state}. Instead, we determine the optimal hyperparameters $\lambda_\ell$ and select the feature maps $\phi_\ell$ through a hyperparameter tuning process (see Appendix~\ref{sec:tuning}). This leads to a different selected model with a MAE of $9.9$ GW (see Appendix~\ref{sec:case_study1} for a complete description of the models). 
Thus, the online WeaKL given by \eqref{eq:online} outperforms the state-of-the-art by $9\%$. 
As done in the IEEE competition \citep{Farrokhabadi2022day}, we assess the significance of this result by evaluating the MAE skill score using a block bootstrap approach (see Appendix~\ref{sec:case_study1}). 
It shows that the online WeaKL outperforms the winning model proposed by \citet{vilmarest2022state} with a probability above $90\%$.
The updated results of the competition are presented in Table~\ref{tab:ieee}. 
Note that a great variety of models were benchmarked in this competition, like Kalman filters (Team~4), autoregressive models (Teams~4 and 7), random forests (Teams~4 and 6), gradient boosting (Teams~6 and 36),  deep residual networks (Team~19), and averaging (Team~13).

\begin{table}[h]
\centering
\caption{Performance of the online WeaKL and of the top $10$ participants of the IEEE competiton. A specific bootstrap test shows that the WeaKL significantly outperform the winning team.}
\begin{tabular}{lccccccccccc}
\toprule
Team & WeaKL &  4 &   14 &   7 &   36 &   19 &   23 &   9 &   25 &   13 &   26 \\
\midrule
MAE (GW)  & \textbf{9.9} & 10.9 & 11.8 & 11.9 & 12.3 & 12.3 & 13.9 & 14.2 & 14.3 & 14.6 & 15.4\\
\bottomrule
\end{tabular}

\label{tab:ieee}
\end{table}

\paragraph{Use case 2: Load forecasting during the energy crisis.}
In this second use case, we evaluate the performance of our WeaKL within the open source benchmark framework proposed by \citet{doumeche2023human}. This benchmark provides a comprehensive evaluation of electricity demand forecasting models, incorporating the GAM boosting model of \cite{bentaieb2014a}, the GAM of \cite{obst2021adaptive}, the Kalman models of \cite{vilmarest2022state}, the time series random forests of \cite{gohery2023random}, and the Viking model of \cite{Vilamarest2024adaptive}. 
The goal here is to forecast the French electricity demand during the energy crisis in the winter of 2022-2023. Following the war in Ukraine and maintenance problems at nuclear power plants, electricity prices reached an all-time high at the end of the summer of 2022. In this context, French electricity demand decreased by $10\%$ compared to its historical trends \citep{doumeche2023human}. 
This significant shift in electricity demand can be interpreted as a structural break, which justifies the application of the online WeaKL given by \eqref{eq:online}.

In this benchmark, the models are trained from 8 January 2013 to 1 September 2022, and then evaluated from 1 September 2022 to 28 February 2023.
The dataset consists of temperature data from the French meteorological administration \citet{meteoFrance}, and electricity demand data from the French  transmission system operator \citet{rteData}, sampled with a half-hour resolution.  
This translates into the feature variable 
\[X =(\mathrm{Load}_1, \mathrm{Load}_7, \mathrm{Temp}, \mathrm{Temp}_{950},  \mathrm{Temp}_{\mathrm{max 950}}, \mathrm{Temp}_{\mathrm{min 950}}, \mathrm{ToY},  \mathrm{DoW}, \mathrm{Holiday},t),
\]
where $\mathrm{Load}_1$ and $\mathrm{Load}_7$ are the electricity demand lagged by one day and seven days, $\mathrm{Temp}$ is the temperature, and $\mathrm{Temp}_{950}$,  $\mathrm{Temp}_{\mathrm{max 950}}$, and $\mathrm{Temp}_{\mathrm{min 950}}$ are smoothed versions of $\mathrm{Temp}$. The time of year $\mathrm{ToY} \in \{1, \hdots, 365\}$ encodes the position within the year. 
The day of the week $\mathrm{DoW} \in \{1, \hdots, 7\}$ encodes the position within the week. 
In addition, $\mathrm{Holiday}$ is a boolean variable set to one during holidays, and $t$ is the timestamp. 
Here, the target $Y = \mathrm{Load}$ is the electricity demand, so $d_1 = 10$ and $d_2 = 1$.

We compare the performance of two of our WeaKLs against this benchmark.
First, our additive WeaKL is a direct translation of the GAM formula proposed by  \cite{obst2021adaptive} into the additive WeaKL framework given by \eqref{eq:weaklGAM}. Thus, $f_\theta(x) = \sum_{\ell=1}^{10} g_\ell(x_\ell)$, where:
\begin{itemize}
    \item the effects $g_1$, $g_2$, and $g_{10}$ of $\mathrm{Load}_1$, $\mathrm{Load}_7$, and $t$ are linear,
    \item the effects $g_3,\dots, g_7$ of $\mathrm{Temp}$, $\mathrm{Temp}_{950}$,  $\mathrm{Temp}_{\mathrm{max 950}}$, $ \mathrm{Temp}_{\mathrm{min 950}}$, and $\mathrm{ToY}$ are nonlinear with $m=10$,
    \item the effects $g_8$ and $g_9$ of   $\mathrm{DoW}$ and $\mathrm{Holiday}$ are categorical with $|E| = 7$ and $|E| = 2$.
\end{itemize}

\begin{wrapfigure}[\lenghtfig]{r}{0.4\textwidth}
    \centering 
    \vspace{-1em}
    \includegraphics[width=\linewidth, trim={0.4cm 0.3cm 0.2cm 0.9cm},clip]{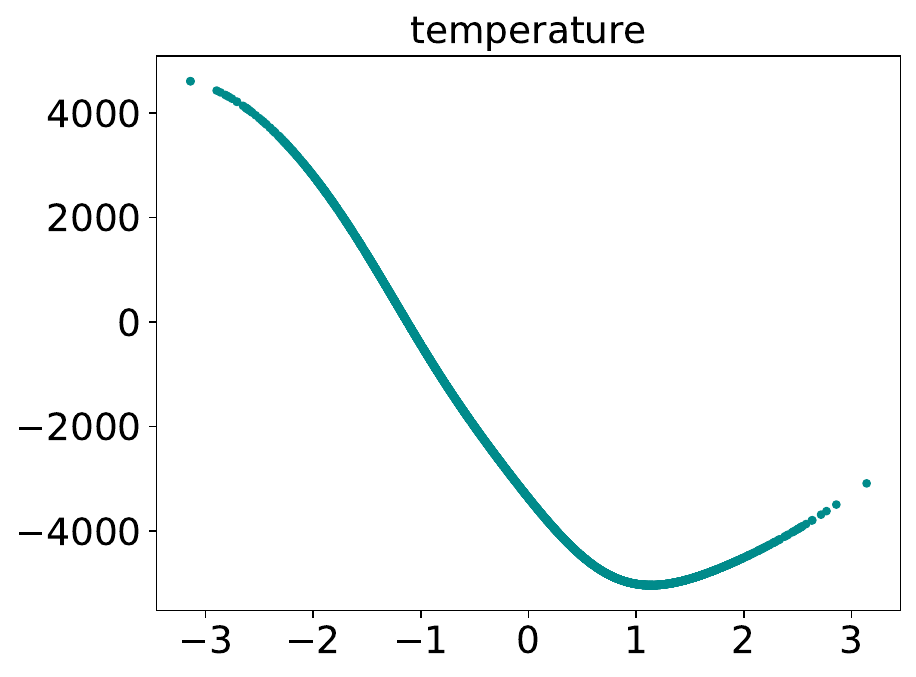}
    \vspace{-2em}
    \caption{Effect in MW of the temperature in the additive WeaKL.}
    \label{fig:WeaKL_effect}
 \end{wrapfigure}
\noindent The weights $\theta$ are learned using data from 2013 to 2021, while the optimal hyperparameters $\lambda_1, \dots, \lambda_{10}$ are tuned using a validation set covering the period from 2021 to 2022.
Once the additive WeaKL is learned, it becomes straightforward to interpret the impact of each feature on the model. For example, the effect $\hat g_3: \mathrm{Temp} \mapsto \langle\phi_{1,3}(\mathrm{Temp}), \hat \theta_{1, 3}\rangle $ of the rescaled temperature feature ($\mathrm{Temp} \in [-\pi, \pi]$) is illustrated in Figure~\ref{fig:WeaKL_effect}.

Second, our online WeaKL is the online adaptation of $f_\theta$  in response to a structural break, as described by \eqref{eq:online}.
The hyperparameters $\lambda_0, \dots, \lambda_{10}$ in \eqref{eq:risk_online} are chosen to minimize the error over a validation period from $1$ April $2020$ to $1$ June $2020$, corresponding to the first COVID-19 lockdown. Note that this validation period does not immediately precede the test period, which is uncommon in time series analysis. However, this choice ensures that the validation period contains a structural break, making it as similar as possible to the test period. Next, the functions $h_0, \dots, h_{10}$ in \eqref{eq:model} are trained on a period starting from $1$ July $2020$, and updated online. 

The results are summarized in Table~\ref{table_score_target_agg2}. The errors and their standard deviations are assessed by stationary block bootstrap (see Appendix~\ref{sec:block-bootstrap}). Since holidays are notoriously difficult to predict, performance is evaluated over the entire period (referred to as \textit{Including holidays}), and separately excluding holidays and the days immediately before and after (referred to as \textit{Excluding holidays}). 
Over both test periods, the additive WeaKL  significantly outperforms the GAM, while the online WeaKL outperforms the state-of-the-art by more than $10\%$ across all metrics.

Figure~\ref{fig:err_time_weaKL} shows the errors of the WeaKLs as a function of time during the test period, which includes holidays. 
During the sobriety period, electricity demand decreased, causing the additive WeaKL to overestimate demand, resulting in a negative bias. Interestingly, this bias is effectively corrected by the online WeaKL, which explains its strong performance. This shows that the online update of the effects effectively corrects biases caused by shifts in the data distribution.

Then, we compare the running time of the algorithms. 
Note that, during hyperparameter tuning, the GPU implementation of WeaKL makes it possible to train $1.6\times 10^5$ additive WeaKL over a period of eight years in less than five minutes on a single standard GPU (NVIDIA $L4$). 
As for the online WeaKL, the training is more computationally intensive because the model must be updated in an online fashion.
However, training $9.2 \times 10^3$ online WeaKLs over a period of two years takes less than two minutes.
This approach is faster than the Viking algorithm, which takes over $45$ minutes to evaluate the same number of parameter sets on the same dataset, even when using $10$ CPUs in parallel. A detailed comparison of the running times for all algorithms is provided in Appendix~\ref{sec:sobriety}.
\begin{table}[H]
\centering
\caption{Benchmark for load forecasting during the energy crisis}
\begin{tabular*}{\textwidth}{@{\extracolsep\fill}lcccc}
  \toprule
  & \multicolumn{2}{@{}c@{}}{ Including holidays} & \multicolumn{2}{@{}c@{}}{Excluding holidays} \\\cmidrule{2-3}\cmidrule{4-5}%
 & RMSE (GW)& MAPE (\%) &  RMSE (GW)& MAPE (\%)\\
  \midrule
  \textit{Statistical model} &&&&\\
  Persistence (1 day) & 4.0$\pm$0.2 & 5.5$\pm$0.3 & 4.0$\pm$0.2  & 5.0$\pm$0.3\\
  SARIMA  &  2.4$\pm$0.2   & 3.1$\pm$0.2 & 2.0$\pm$0.2  & 2.6$\pm$0.2\\
  GAM & 2.3$\pm$0.1 & 3.5$\pm$0.2   & 1.70$\pm$0.06 & 2.6$\pm$0.1 \\
  \midrule
    \textit{Data assimilation }\\
  Static Kalman & 2.1$\pm$0.1 & 3.1$\pm$0.2   &  1.43$\pm$0.05 & 2.20$\pm$0.08 \\
  Dynamic Kalman & 1.4$\pm$0.1 & 1.9$\pm$0.1   & 1.10$\pm$0.04 & 1.58$\pm$0.05  \\
    Viking & 1.5$\pm$0.1 & 1.8$\pm$0.1 &  0.98$\pm$0.04 & 1.33$\pm$0.04\\
    Aggregation & 1.4$\pm$0.1 & 1.8$\pm$0.1 & 0.96$\pm$0.04 & 1.36$\pm$0.04\\
    \midrule
    \textit{Machine learning}\\
    GAM boosting & 2.6$\pm$0.2 & 3.7$\pm$0.2 & 2.3$\pm$0.1 & 3.3$\pm$0.2 \\
    Random forests &  2.5$\pm   $0.2& 3.5$\pm$0.2& 2.1$\pm$0.1 & 3.0$\pm$0.1\\
    Random forests + bootstrap & 2.2$\pm$0.2 & 3.0$\pm$0.2 & 1.9$\pm$0.1 & 2.6$\pm$0.1\\
    \midrule
    \textit{WeaKLs}\\
    Additive WeaKL & 1.95$\pm$0.08 & 3.0 $\pm$0.1& 1.55$\pm$0.06 & 2.32$\pm$0.09  \\
    Online WeaKL &  \textbf{1.14$\pm$0.09}& \textbf{1.5$\pm$0.1}&  \textbf{0.87$\pm$0.04 }& \textbf{1.17$\pm$0.05} \\
   \bottomrule
\end{tabular*}
\label{table_score_target_agg2}
\end{table}
Both use cases demonstrate that WeaKL models are very powerful. Not only are they highly interpretable---thanks to their ability to fit into a common framework and produce simple formulas---but they are also competitive with state-of-the-art techniques in terms of both optimization efficiency (they can run on GPUs) and performance (measured by MAPE and RMSE).
\begin{figure}
    \centering
    \includegraphics[width=1\linewidth, trim={2.2cm 1.2cm 3cm 2.1cm},clip]{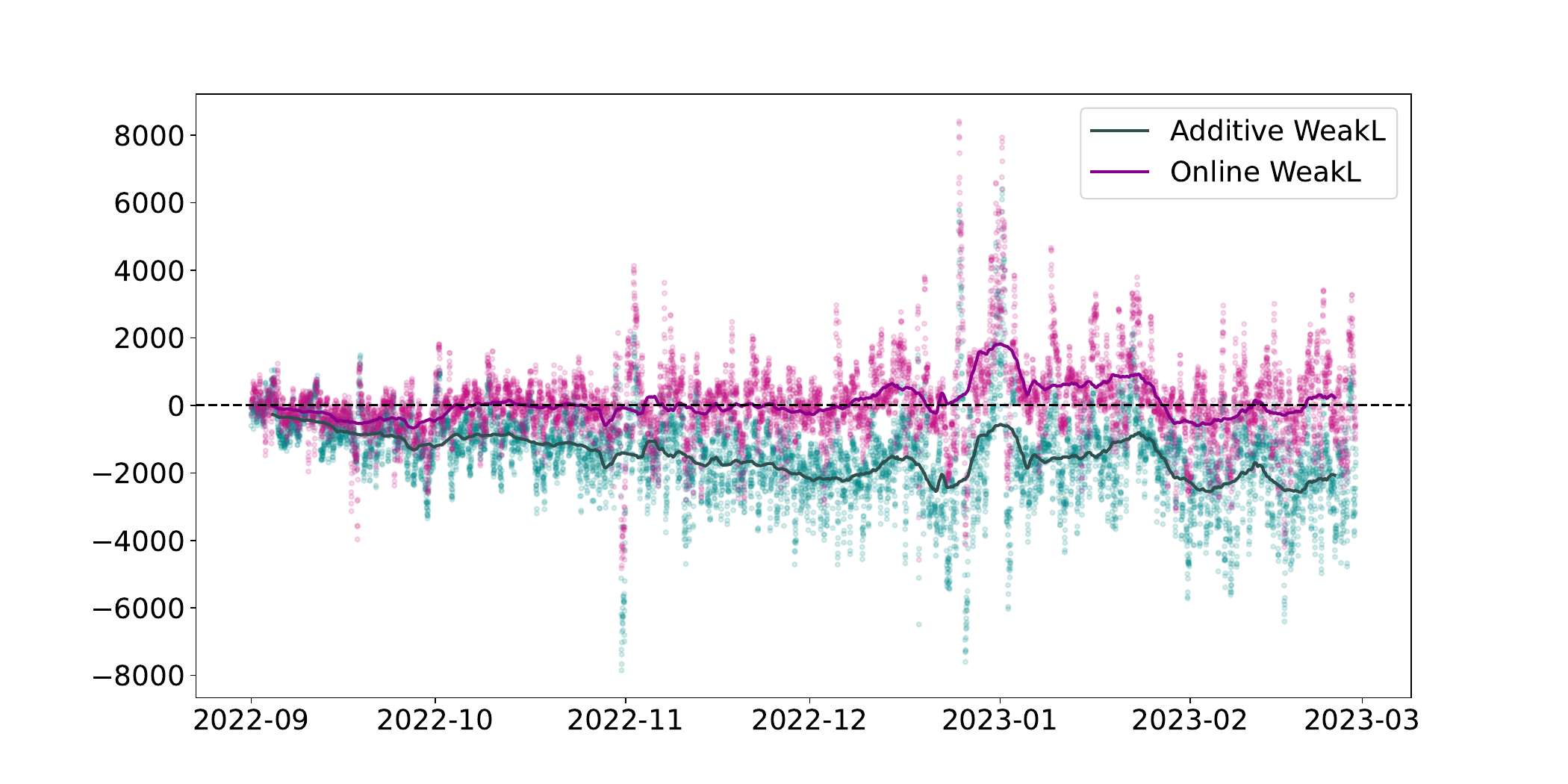}
    \caption{Error $Y_t - \hat Y_t$ in MW of the WeaKLs on the test period including holidays. Dots represent individual observations, while the bold curves indicate the one-week moving averages.}
    \label{fig:err_time_weaKL}
\end{figure}
\section{Learning constraints}
\label{sec:weight}
\subsection{Mathematical formulation}
Section~\ref{sec:shape} focused on imposing constraints on the shape of the regression function $f^\star$. In contrast, the goal of the present section is to impose constraints on the parameter $\theta$. 
We begin with a ge\-ne\-ral method to enforce linear constraints on $\theta$, and subsequently apply this framework to transfer learning, hierarchical forecasting, and differential constraints.

\paragraph{Linear constraints.}
Here, we assume that $f^\star$ satisfies a linear constraint.
By construction of $f_\theta$ in \eqref{eq:model_def}, such a linear constraint directly translates into a constraint on $\theta$.
For example, the linear constraint $f^{\star\,1}(X_t) = 2f^{\star\,2}(X_t)$ can be implemented by enforcing $\theta_1 = 2\theta_2$.
Thus, in the following, we assume a prior on $\theta$ in the form of a linear constraint. Formally, we want to enforce that $\theta \in \mathcal{S}$, where $\mathcal{S}$ is a known linear subspace of $\mathbb{C}^{\dim(\theta)}$.
Given an injective $\dim(\theta) \times \dim(\mathcal{S})$ matrix $P$ such that $\mathrm{Im}(P) = \mathcal{S}$, then, as shown in Lemma~\ref{lem:ortho}, $\|C\theta\|_2^2$ is the square of the Euclidean distance between $\theta$ and $\mathcal S$, where $C=\mathrm{I}_{\dim(\theta)} - P(P^\ast P)^{-1}P^\ast$.  In particular, $\|C\theta\|_2^2 = 0 $ is equivalent to $\theta \in \mathcal S$, and $\|C\theta\|_2^2 = \|\theta\|_2^2$ if $\theta \in \mathcal S^\perp$. From this observation, there are two ways to enforce $\theta \in \mathcal S$ in the empirical risk \eqref{eq:risk}.

On the one hand, suppose that $f^\star$ exactly satisfies the linear constraint.
This happens in particular when the constraint results from a physical law. 
For example, to build upon the use cases of Section~\ref{sec:energy_crisis}, assume that we want to forecast the electricity load of different regions of France, i.e., the target $Y \in \mathbb R^3$ is such that $Y_1$ is the load of southern France, $Y_2$ is the load of northern France, and $Y_3 = Y_1+Y_2$ is the national load. 
This prototypical example of hierarchical forecasting is presented in Section~\ref{sec:toy-example}, where we show how incorporating even a simple constraint can significantly improve the model's performance. In this example, we know that $f^\star$ satisfies the constraint $f^{\star\,3} = f^{\star\,1} + f^{\star\,2}$.
When dealing with such exact priors, a sound approach is to consider only parameters $\theta$ such that $C\theta = 0$, or equivalently, $\theta = P\theta'$. Letting $\Pi_\ell$ be the $D_\ell\times \dim(\theta)$ projection matrix such that $\theta_\ell = \Pi_\ell \theta$, we have $\langle \phi_\ell (X_t), \theta_\ell\rangle = \langle \phi_\ell (X_t), \Pi_\ell \theta\rangle  = \langle P^\ast \Pi_\ell^\ast \phi_\ell (X_t),  \theta'\rangle$. 
Thus, minimizing the empirical risk \eqref{eq:risk} over $\theta' \in \mathbb C^{\dim(\mathcal S)}$ simply requires changing $\phi_\ell$ to $P^\ast \Pi_\ell^\ast \phi_\ell$, which is equivalent to replacing $\mathbb{\Phi}_t$ with $\mathbb{\Phi}_t P$ in \eqref{eq:weakl}. 

On the other hand, suppose that the linear constraint serves as a good but inexact prior. 
For example, building on the last example, let $X_t$ be the average temperature in France at time $t$. We expect the loads $Y_1$ in southern France and $Y_2$ in northern France to behave similarly. 
In both regions, lower temperatures lead to increased heating usage (and thus higher loads), while higher temperatures result in increased cooling usage (also leading to higher loads). 
Therefore, $f^{\star\,1}$ and $f^{\star\,2}$ share the same shape, resulting in the prior $f^{\star\,1} \simeq f^{\star\,2}$. 
This prototypical example of transfer learning is explored in the following paragraphs. Such inexact constraints can be enforced by adding a penalty $\lambda \|C\theta\|_2^2$ in the empirical risk \eqref{eq:risk}, where $\lambda > 0$ is an hyperparameter. (Equivalently, this only consists in replacing $M$ with $(\sqrt{\lambda} C^\top \mid  M^\top)^\top$ in \eqref{eq:risk}.)
This ensures that $\|C\hat \theta\|_2^2$ is small, while allowing the model to learn functions that do not exactly satisfy the constraint. 

These approaches are statistically sound, since under the assumption that $Y_t = f_{\theta^\star}(X_t)+ \varepsilon_t$, where $\theta^\star \in \mathcal{S}$, both estimators have lower errors compared to unconstrained regression. This is true in the sense that, almost surely,
\[\frac{1}{n}\sum_{j=1}^n\| f_{\theta^\star}(X_{t_j}) - f_{\hat \theta_C}(X_{t_j})\|_2^2 + \|M(\theta^\star- \hat \theta_C)\|_2^2\leq \frac{1}{n}\sum_{j=1}^n\| f_{\theta^\star}(X_{t_j}) - f_{\hat \theta}(X_{t_j})\|_2^2 + \|M(\theta^\star- \hat \theta)\|_2^2,\]
where $\hat \theta$ is the unconstrained WeaKL and $\hat \theta_C$ is a WeaKL integrating the constraint $C\theta^\star \simeq 0$ (see Proposition~\ref{prop:prop_lin} and Remark~\ref{rem:comment_prop_lin}).

\paragraph{Transfer learning.} Transfer learning is a framework designed to exploit similarities between different prediction tasks when $d_2 >1$. The simplest case involves predicting multiple targets $Y_1, \hdots, Y_{d_2}$ with similar features $X_1, \hdots, X_{d_2}$.
For example, suppose we want to forecast the electricity demand of $d_2$ cities. Here, $Y_\ell$ is the electricity demand of the city $\ell$, while $X_\ell$ is the average temperature in city $\ell$.
The general function $f^\star$ estimating $(Y_1, \hdots, Y_{d_2})$ can be expressed as $f^\star(X) = f^\star(X_1, \hdots, X_{d_2}) = (f^{\star\,1}(X_1), \hdots, f^{\star\,d_2}(X_{d_2}))$. The transfer learning assumption is $f^{\star\,1} \simeq \cdots \simeq f^{\star\,d_2}$. 
Equivalently, this corresponds to the linear constraint $\theta \in \mathrm{Im}(P)$, where $P = (\mathrm{I}_{2m+1} \mid \cdots \mid \mathrm{I}_{2m+1})^\top$ is a $(2m+1)d_1\times (2m+1)$ matrix. 
Thus, one can apply the framework of the last paragraph on linear constraints as inexact prior using $P$.

\paragraph{Hierarchical forecasting.} Hierarchical forecasting involves predicting multiple time series that are linked by summation constraints. This approach was introduced by \citet{athanasopoulos2009hierarchical} to forecast Australian domestic tourism. Tourism can be analyzed at various geographic scales. 
For example, at time $t$, one could consider the total number $Y_{A,t}$ of tourists in Australia, and the number $Y_{S_i,t}$ of tourists in each of the seven Australian states $S_1,\hdots, S_7$. By definition, $Y_{A,t}$ is the sum of the $Y_{S_i, t}$, which leads to the exact summation constraint $Y_{A,t} = \sum_{i=1}^7 Y_{S_i, t}$. Furthermore, since each state $S_i$ is composed of $z_i$ zones $Z_{i,1}$, $\dots$, $Z_{i, z_i}$, an additional hierarchical level can be introduced. 
Note that the number of zones depends on the state, for a total of 27 zones.
This results in another set of summation constraints
$Y_{S_i, t} =  Y_{Z_{i,1}, t}+\cdots + Y_{Z_{i,z_i}, t}$. 
\begin{figure}
    \centering
    \includegraphics[width=0.7\linewidth, trim={0 3.5cm 0 0},clip]{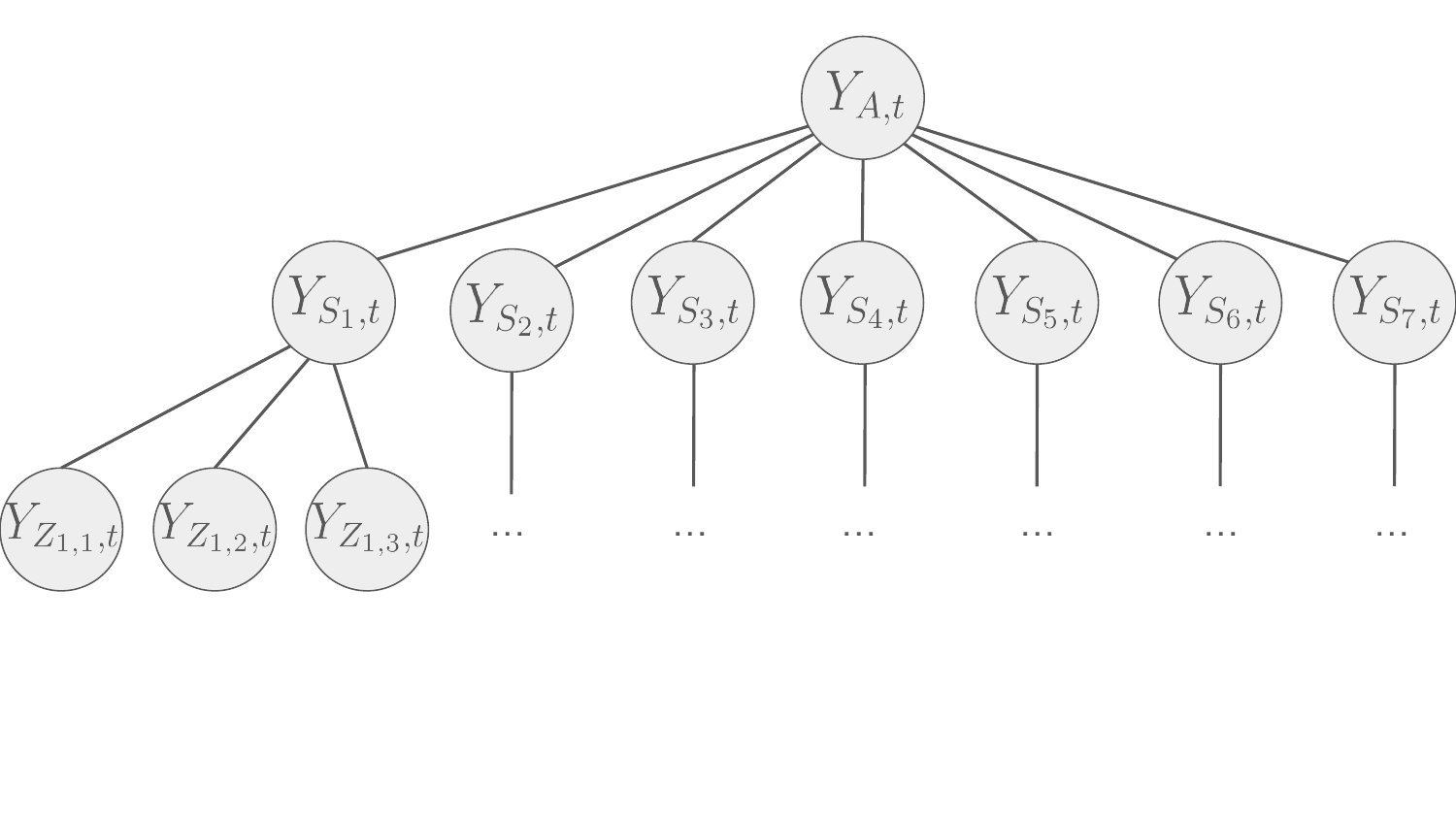}
    \caption{Graph representing the hierarchy of Australian domestic tourism.}
    \label{fig:DAC}
\end{figure}
Overall, the complete set of summation constraints can be represented by a directed acyclic graph, as shown in Figure~\ref{fig:DAC}. 
Alternatively, these constraints can be expressed by a $35 \times 27$ summation matrix $S$ that connects the bottom-level series $Y_b= ( Y_{Z_{1,1}},\hdots, Y_{Z_{7,z_7}})^\top \in \mathbb R^{27}$ to all hierarchical nodes $Y = (Y_{Z_{1,1}},\dots, Y_{Z_{7,z_7}}, Y_{S_1}, \hdots, Y_{S_7}, Y_A)^\top \in \mathbb R^{35}$ through the relation $Y = S Y_b$.
Thus, by letting $\mathbb 1 = (1, \hdots, 1)^\top \in \mathbb{R}^{27}$, and defining $\mathbb 1^{(j)}\in \mathbb R^{27}$ by $\mathbb 1^{(j)}_i = \left\{\begin{array}{cc}
     1 &   \hbox{ if } \sum_{k=1}^{j-1} z_k \leq i \leq \sum_{k=1}^j z_k\\
     0 &   \hbox{ otherwise}
\end{array}\right.$,  we have that $S = (\mathrm{I}_{27} \mid \mathbb 1^{(1)} \mid \cdots \mid \mathbb 1^{(7)} \mid  \mathbb 1)^\top$. 
The goal of hierarchical forecasting is to take advantage of the summation constraints defined by $S$ to improve the predictions of the vector $Y$ representing all hierarchical nodes.

This context can be easily generalized to many time series forecasting tasks. Spatial summation constraints, which divide a geographic space into different subspaces, have been applied in areas such as electricity demand forecasting \citep{bregere2022online}, electric vehicle charging demand forecasting \citep{amara-ouali2024forecasting}, and tourism forecasting \citep{Wickramasuriya2019optimal}.
Summation constraints also arise in multi-horizon forecasting, where, for example, an annual forecast must equal the sum of the corresponding monthly forecasts \citep{kourentzes2019cross}.
Finally, they also appear when goods are categorized into different groups \citep{pennings2017integrated}.

There are two main approaches to hierarchical forecasting. The first, known as forecast reconciliation, attempts to improve an existing estimator $\hat{Y}$ of the hierarchical nodes $Y$ by multiplying $\hat{Y}$ by a so-called reconciliation matrix $P$, so that the new estimator $P \hat Y$ satisfies the summation constraints. 
Formally, it is required that $\mathrm{Im}(P) \subseteq \mathrm{Im}(S)$, where $S$ is the summation matrix. 
The goal is for $P\hat{Y}$ to have less error than $\hat{Y}$. 
The strengths of this approach are its low computational cost and its ability to seamlessly integrate with pre-existing forecasts. 
Various reconciliation matrices, such as the orthogonal projection $P = S(S^\top S)^{-1}S$ on $\mathrm{Im}(S)$ (see the paragraph above on linear constraints), have been shown to reduce forecasting errors and to even be optimal under certain assumptions 
\citep{Wickramasuriya2019optimal}. Another complementary approach is to incorporate the hierarchical structure of the problem directly into the training of the initial estimator $\hat{Y}$ 
\citep{rangapuram21end}.
While this method is more computationally intensive, it provides a more comprehensive solution than reconciliation methods because it uses the hierarchy not only to shape the regression function, but also to inform the learning of its parameters. In this paper, we build on this approach to design three new estimators, all of which are implemented in Section~\ref{sec:tourism}.

As for now, we denote by $\ell_1$ the total number of nodes and $\ell_2 \leq \ell_1$ the number of bottom nodes. Thus, $Y=(Y_{\ell})_{1\leq \ell \leq \ell_1}^\top$ represents the global vector of all nodes, while $Y_b=(Y_{\ell})_{1\leq \ell \leq \ell_2}^\top$ represents the vector of the bottom nodes.
The $\ell_1 \times \ell_2$ summation matrix $S$ is defined so that, for all time index~$t$, the summation identity $Y_t = S Y_{b,t}$ is satisfied. 

\paragraph{Estimator 1. Bottom-up approach: WeaKL-BU.} In the bottom-up approach, models are fitted only for the bottom-level series $Y_b$, resulting in a vector of estimators $\hat{Y}_b$. The remaining levels are then estimated by $\hat{Y} = S \hat{Y}_b$, where $S$ is the summation matrix. 

To achieve this, forecasts for each bottom node $1 \leq \ell \leq \ell_2$ are constructed using a set of explanatory variables $X_\ell \in \mathbb R^{d_\ell}$ specific to that node. Together, these explanatory variables $X_1, \hdots, X_{\ell_2}$ form the feature $X\in \mathbb R^{d_1+\dots +d_{\ell_2}}$. A straightforward choice of features are the lags of the target variable, i.e., $X_{\ell, t} = Y_{\ell, t-1}$, though many other choices are possible. Next, for each bottom node $1 \leq \ell \leq \ell_2$, we fit a parametric model $f_{\theta_\ell}(X_{\ell, t})$ to predict the series $Y_{\ell, t}$. 
Each function $f_{\theta_\ell}$ is parameterized by a mapping $\phi_\ell$ (e.g., a Fourier map or an additive model) and a coefficient vector $\theta_\ell$, such that
$f_{\theta_\ell}(X_{\ell,t}) = \langle \phi_\ell(X_{\ell,t}), \theta_\ell \rangle$.
Therefore, the model for the lower nodes $Y_{b, t}$ can be expressed as $\mathbb \Phi_t \theta$, where  $\theta = (\theta_1, \hdots, \theta_{\ell_2})^\top$ is the vector of all coefficients, and $\mathbb{\Phi}_t$ is the feature matrix at time $t$ defined in \eqref{eq:feature_matrix}. 
Overall, the model for all levels $Y_t = S Y_{b, t}$ is $S \mathbb{\Phi}_t \theta$, and the empirical risk corresponding to this problem is given by
\[L(\theta) = \frac{1}{n}\sum_{j=1}^n\|\Lambda( S\mathbb \Phi_{t_j} \theta - Y_{t_j})\|_2^2 + \|M\theta\|_2^2,\]
where $\Lambda$ is a $\ell_1\times \ell_1$ diagonal matrix with positive coefficients,  and $M$ is a penalty matrix that depends on the $\phi_\ell$ mappings, as in Section~\ref{sec:shape}. 

Since $\Lambda$ scales the relative importance of each node in the learning process, the choice of its coefficients plays a critical role in the performance of the estimator. 
In the experimental Section~\ref{sec:tourism}, $\Lambda$ will be learned through hyperparameter tuning. Typically, $\Lambda_{\ell, \ell}$ should be large when $\mathrm{Var}(Y_\ell | X_\ell)$ is low---that is, the more reliable $Y_\ell$ is as a target \citep{Wickramasuriya2019optimal}. From~\eqref{eq:feature_matrix}, we deduce that the minimizer $\hat \theta$ of the empirical risk is 
    \begin{equation}
    \label{eq:pikl-bu}
        \hat \theta = \Big(\Big(\sum_{j=1}^n \mathbb \Phi_{t_j}^\ast S^\ast\Lambda^\ast \Lambda S\mathbb \Phi_{t_j}\Big) + n M^\ast M\Big)^{-1} \sum_{j=1}^n \mathbb \Phi_{t_j}^\ast \Lambda^\ast \Lambda Y_{t_j}.
    \end{equation}
We call $\hat{\theta}$ the {\it WeaKL-BU}. Setting $\Lambda = \mathrm{I}_{\ell_1}$, i.e., the identity matrix, results in treating all hierarchical levels equally, which is the setup of \citet{rangapuram21end}. On the other hand, setting $\Lambda_{\ell, \ell} = 0$ for all $\ell \geq \ell_2$ leads to learning each bottom node independently, without using any information from the hierarchy. 
This is the traditional bottom-up approach.

\paragraph{Estimator 2. Global hierarchy-informed approach: WeaKL-G.} 
The context is similar to the bottom-up approach, but here models are fitted for all nodes $1 \leq \ell \leq \ell_1$, using local explanatory variables $X_\ell \in \mathbb{R}^{d_\ell}$, where $d_\ell \geq 1$. Thus, the model for $Y_{t}$ is given by $\mathbb \Phi_t \theta$, where $\theta = (\theta_1, \hdots, \theta_{\ell_1})^\top$ is the vector of coefficients and $\mathbb{\Phi}_t$ is the feature matrix at time $t$ defined in \eqref{eq:feature_matrix}.
To ensure that the hierarchy is respected, we introduce a penalty term:
\[
\|\Gamma(S\Pi_b\mathbb \Phi_t\theta-\mathbb \Phi_t\theta)\|_2^2  = \|\Gamma(S\Pi_b-\mathrm{I}_{\ell_1})\mathbb \Phi_t\theta\|_2^2,
\]
where $\Gamma$ is a positive diagonal matrix and $\Pi_b$ is the projection operator on the bottom level, defined as $\Pi_b \theta = (\theta_1, \hdots, \theta_{\ell_2})^\top$. As in the bottom-up case, $\Gamma$ encodes the level of trust assigned to each node. 
In Section~\ref{sec:tourism}, we learn $\Gamma$ through hyperparameter tuning. This results in the empirical risk
\[L(\theta) = \frac{1}{n}\sum_{j=1}^n\|\mathbb \Phi_{t_j} \theta - Y_{t_j}\|_2^2 + \frac{1}{n}\sum_{j=1}^n\|\Gamma(S\Pi_b-\mathrm{I}_{\ell_1})\mathbb \Phi_{t_j}\theta\|_2^2 + \|M\theta\|_2^2.\]
where $M$ is a penalty matrix that depends on the $\phi_\ell$ mappings, as in Section~\ref{sec:shape}.
This empirical risk is similar to the one proposed by \citet{Zheng2023coherent}, where a penalty term is used to enforce hierarchical coherence during the learning process.
From \eqref{eq:feature_matrix}, we deduce that the  minimizer is given by 
    \begin{equation}
    \label{eq:pikl-G}
        \hat \theta = \Big(\sum_{j=1}^n (\mathbb \Phi_{t_j}^\ast\mathbb \Phi_{t_j}+ \mathbb \Phi_{t_j}^\ast(\Pi_b^\ast S^\ast-\mathrm{I}_{\ell_1})\Gamma^\ast \Gamma(S\Pi_b-\mathrm{I}_{\ell_1})\mathbb \Phi_{t_j})+nM^\ast M\Big)^{-1} \sum_{j=1}^n \mathbb \Phi_{t_j}^\ast Y_t.
    \end{equation}
We refer to $\hat{\theta}$  as the {\it WeaKL-G}.
The fundamental difference between \eqref{eq:pikl-bu} and \eqref{eq:pikl-G} is that the WeaKL-BU estimator only learns parameters for the $\ell_2$ bottom nodes, whereas the WeaKL-G estimators learns parameters for all nodes. We emphasize that WeaKL-BU and WeaKL-G follow different approaches. While WeaKL-BU adjusts the lower-level nodes and then uses the summation matrix $S$ to estimate the higher levels, WeaKL-G relies directly on global information, which is subsequently penalized by $S$. In the next paragraph, we complement the WeaKL-BU estimator by adding transfer learning constraints.

\paragraph{Estimator 3. Hierarchy-informed transfer learning: WeaKL-T.} In many hierarchical forecasting applications, the targets $Y_{\ell}$ are of the same nature throughout the hierarchy. Consequently, we often expect them to be explained by similar explanatory variables $X_{\ell}$ and to have similar regression functions estimators $f_{\hat{\theta}_\ell}$ \citep[e.g.,][]{leprince2023hierarchical}. For this reason, we propose an algorithm that combines WeaKL-BU with transfer learning.

Therefore, we assume that there is a subset $J \subseteq \{1,
\hdots, \ell_2\}$ of similar nodes and weights $(\alpha_i)_{i\in J}$ such that we expect $\alpha_i f_{\hat \theta_{i}}(X_{i,t}) \simeq \alpha_j f_{\hat \theta_{j}}(X_{j,t})$ for $i, j \in J$. 
In particular, there is an integer $D$ such that $\theta_j \in \mathbb{C}^{D}$ for all $j\in J$.
Therefore, denoting by $\Pi_J$ the projection on $J$ such that $\Pi_J\theta = (\theta_j)_{j\in J}\in \mathbb C^{D|J|}$, this translates into the constraint that $\Pi_J \theta \in \mathrm{Im}(M_J)$ where $M_J = (\alpha_1 \mathrm{I}_{D}, \hdots, \alpha_{|J|} \mathrm{I}_{D})^\top$.
As explained in the paragraph on linear constraints, we enforce this inexact constraint by penalizing the empirical risk with the addition of the term $\|(\mathrm{I}_{D|J|}-P_J)\Pi_J\theta\|_2^2$, where $P_J = M_J(M_J^\ast M_J)^{-1}M_J^\ast$ is the orthogonal projection onto the image of $M_J $.
This leads to the empirical risk
\[L(\theta) = \frac{1}{n}\sum_{j=1}^n\|\Lambda( S\mathbb \Phi_{t_j} \theta - Y_{t_j})\|_2^2+\lambda \|(\mathrm{I}_{D|J|}-P_{J})\Pi_{J}\theta\|_2^2 + \|M\theta\|_2^2,\]
where $M$ is a penalty matrix that depends on the $\phi_\ell$ mappings, as in Section~\ref{sec:shape}.
We call {\it WeaKL-T} the minimizer $\hat \theta$ of $L$. It is given by
    \begin{equation}
    \label{eq:pikl-T}
        \hat \theta = \Big(\Big(\sum_{j=1}^n \mathbb \Phi_{t_j}^\ast S^\ast\Lambda^\ast \Lambda S\mathbb \Phi_{t_j}\Big)+  n\lambda \Pi_{J}^\ast(\mathrm{I}_{D|J|}-P_{J})\Pi_{J}+nM^\ast M\Big)^{-1} \sum_{j=1}^n \mathbb \Phi_{t_j}^\ast\Lambda^\ast \Lambda Y_{t_j}.
    \end{equation}

\subsection{Application to tourism forecasting}
\label{sec:tourism}
\paragraph{Hierarchical forecasting and tourism.}

In this experiment, we aim to forecast Australian domestic tourism using the dataset from \citet{Wickramasuriya2019optimal}. The dataset includes monthly measures of Australian domestic tourism from January 1998 to December 2016, resulting in $n = 216$ data points. 
Each month, domestic tourism is measured at four spatial levels and one categorical level, forming a five-level hierarchy. At the top level, tourism is measured for Australia as a whole. 
It is then broken down spatially into $7$ states, $27$ zones, and $76$ regions. 
Then, for each of the $76$ regions, four categories of tourism are distinguished  according to the purpose of travel: holiday, visiting friends and
relatives (VFR), business, and other. This gives a total of five levels (Australia, states, zones, regions, and categories), with $\ell_2 = 76 \times 4 = 304$ bottom nodes, and $\ell_1 = 1 + 7 + 27 + 76 + \ell_2 = 415$ total nodes.

\paragraph{Benchmark.} The goal is to forecast Australian domestic tourism one month in advance. Models are trained on the first $80\%$ of the dataset and evaluated on the last $20\%$. 
Similar to \citet{Wickramasuriya2019optimal}, we only consider autoregressive models with lags from one month to two years.
This setting is particularly interesting because, although each time series can be reasonably fitted using the $216$ data points, the total number of targets $\ell_1$ exceeds $n$. Consequently, the higher levels cannot be naively learned from the lags of the bottom level time series through linear regression.

The bottom-up (BU) model involves running $304$ linear regressions $\hat Y^{\mathrm{BU}}_{\ell,t} = \sum_{j=1}^{24}a_{\ell, j}Y_{\ell,t-j}$ for $1\leq \ell \leq \ell_2$, where $Y_{\ell,t-j}$ is the lag of $Y_{\ell, t}$ by $j$ months. 
The final forecast is then computed as $\hat Y^{\mathrm{BU}}_{t} = S\hat Y^{\mathrm{BU}}_{\ell,t}$,  where $S$ is the summation matrix.  
The Independent (Indep) model involves running separate linear regressions for each target time series using its own lags. This results in $415$ linear regressions of the form $\hat Y^{\mathrm{Indep}}_{\ell,t} = \sum_{j=1}^{24}a_{\ell, j}Y_{\ell,t-j}$ for $1\leq \ell \leq \ell_1$. 
Rec-OLS is the estimator resulting from OLS adjustment of the Indep estimator, i.e., taking $P = S(S^\ast S)^{-1}S$ \citep{Wickramasuriya2019optimal}. 
MinT refers to the estimator derived from the minimum trace adjustment of the Indep estimator \citep[see MinT(shrinkage) in][]{Wickramasuriya2019optimal}.
PIKL-BU refers to the estimator \eqref{eq:pikl-bu}, where, for all $1\leq \ell \leq 304$, $X_{\ell,t} = (Y_{\ell, t-j})_{1\leq j \leq 24}$ and $\phi_\ell(x) = x$.
PIKL-G is the estimator \eqref{eq:pikl-G}, where, for all $1\leq \ell \leq 415$, $X_{\ell,t} = (Y_{\ell, t-j})_{1\leq j \leq 24}$ and $\phi_\ell(x) = x$. 
Finally, PIKL-T is the estimator \eqref{eq:pikl-T}, where $X_{\ell,t} = (Y_{\ell, t-j})_{1\leq j \leq 24}$ and $\phi_\ell(x) = x$. 
In the latter model, all the auto-regressive effects are penalized to enforce uniform weights, which means that $\alpha_\ell = 1$ and $J = \{1, \hdots, \ell_2\}$ in \eqref{eq:pikl-T}.
The hyperparameter tuning process to learn the matrix $\Lambda$ for the WeaKLs is detailed in Appendix~\ref{sec:hierarchical_details}.

\paragraph{Results.} Table~\ref{table_australia} shows the results of the experiment. The mean square errors (MSE) are computed for each hierarchical level and aggregated under {\it All levels}. 
Their standard deviations are estimated using block bootstrap with blocks of length $12$.
The models are categorized based on the features they utilize.
We observe that the WeaKL-type estimators consistently outperform all other competitors in every case. This highlights the advantage of incorporating constraints to enforce the hierarchical structure of the problem, leading to an improved learning process.

\begin{table}[H]
\centering
\caption{Benchmark in forecasting Australian domestic tourism} 
\begin{tabular}{lcccccc}
\toprule
 &  & &$\;$\hfill  MSE  & ($\times 10^6$) \hfill $\;$&   &  \\
 \cmidrule{2-7}%
 & Australia & States & Zones & Regions & Categories & All levels \\
\midrule
\textit{Bottom data} &&&&&&\\
BU & $5.3\!\pm\!0.5$ & $2.0\!\pm\!0.2$ & $1.37\!\pm\!0.05$ & $\mathbf{1.19\!\pm\!0.02}$ & $\mathbf{1.17\!\pm\!0.03}$ & $11.0\!\pm\!0.7$ \\
WeaKL-BU & $\mathbf{4.5\!\pm\!0.5}$ & $\mathbf{1.9\!\pm\!0.3}$ & $\mathbf{1.34\!\pm\!0.05}$ & $\mathbf{1.19\!\pm\!0.03}$ & $\mathbf{1.17\!\pm\!0.03}$ & $\mathbf{10.1\!\pm\!0.6}$ \\
\midrule
\textit{Own lags} &&&&&&\\
Indep & $3.6\!\pm\!0.6$ & $1.8\!\pm\!0.2$ & $1.42\!\pm\!0.05$ & $1.23\!\pm\!0.03$ & $1.17\!\pm\!0.03$ & $9.2\!\pm\!0.7$ \\
WeaKL-G & $\mathbf{3.6\!\pm\!0.5}$ & $\mathbf{1.8\!\pm\!0.2}$ & $\mathbf{1.37\!\pm\!0.05}$ & $\mathbf{1.18\!\pm\!0.03}$ & $\mathbf{1.15\!\pm\!0.03}$ & $\mathbf{9.0\!\pm\!0.7}$ \\
\midrule
\textit{All data} &&&&&&\\
Rec-OLS & $3.5\!\pm\!0.5$ & $1.8\!\pm\!0.2$ & $1.35\!\pm\!0.05$ & $1.18\!\pm\!0.02$ & $1.17\!\pm\!0.03$ & $8.9\!\pm\!0.7$ \\
MinT & $3.6\!\pm\!0.4$ & $1.7\!\pm\!0.1$ & $1.29\!\pm\!0.05$ & $\mathbf{1.15\!\pm\!0.03}$ & $1.17\!\pm\!0.03$ & $8.9\!\pm\!0.5$ \\
WeaKL-T & $\mathbf{3.1\!\pm\!0.3}$ & $\mathbf{1.7\!\pm\!0.1}$ & $\mathbf{1.27\!\pm\!0.05}$ & $\mathbf{1.15\!\pm\!0.02}$ & $\mathbf{1.12\!\pm\!0.03}$ & $\mathbf{8.3\!\pm\!0.4}$ \\
\bottomrule
\end{tabular}
\label{table_australia}
\end{table}

\section{Conclusion}

In this paper, we have shown how to design empirical risk functions that integrate common linear constraints in time series forecasting. For modeling purposes, we distinguish between shape constraints (such as additive models, online adaptation after a break, and forecast combinations) and learning constraints (including transfer learning, hierarchical forecasting, and differential constraints). 
These empirical risks can be efficiently minimized on a GPU, leading to the development of an optimized algorithm, which we call WeaKL. 
We have applied WeaKL to three real-world use cases---two in electricity demand forecasting and one in tourism forecasting---where it consistently outperforms current state-of-the-art methods, demonstrating its effectiveness in structured forecasting problems.

Future research could explore the integration of additional constraints into the WeaKL framework. For example, the current approach does not allow for forcing the regression function $f_\theta$ to be non-decreasing or convex. However, since any risk function $L$ of the form \eqref{eq:risk} is convex in $\theta$, the problem can be formulated as a linearly constrained quadratic program. While this generally increases the complexity of the optimization, it can also lead to efficient algorithms for certain constraints. In particular, when $d=1$, imposing a non-decreasing constraint on $f_\theta$ reduces the problem to isotonic regression, which has a computational complexity of $O(n)$ \citep{wright1980isotonic}.

\bibliography{biblio}

\begin{thebibliography}{56}
\providecommand{\natexlab}[1]{#1}
\providecommand{\url}[1]{\texttt{#1}}
\expandafter\ifx\csname urlstyle\endcsname\relax
  \providecommand{\doi}[1]{doi: #1}\else
  \providecommand{\doi}{doi: \begingroup \urlstyle{rm}\Url}\fi

\bibitem[Agarwal et~al.(2021)Agarwal, Melnick, Frosst, Zhang, Lengerich,
  et~al.]{agarwal2021neural}
R.~Agarwal, L.~Melnick, N.~Frosst, X.~Zhang, B.~Lengerich, et~al.
\newblock Neural additive models: Interpretable machine learning with neural
  nets.
\newblock In \emph{Advances in Neural Information Processing Systems},
  volume~34, pages 4699--4711, 2021.

\bibitem[Amara-Ouali et~al.(2024)Amara-Ouali, Goude, Doum{\`e}che, Veyret,
  Thomas, Hebenstreit, et~al.]{amara-ouali2024forecasting}
Y.~Amara-Ouali, Y.~Goude, N.~Doum{\`e}che, P.~Veyret, A.~Thomas,
  D.~Hebenstreit, et~al.
\newblock Forecasting electric vehicle charging station occupancy: Smarter
  mobility data challenge.
\newblock \emph{Journal of Data-{c}entric Machine Learning Research},
  1\penalty0 (16):\penalty0 1--27, 2024.

\bibitem[Antoniadis et~al.(2024)Antoniadis, Cugliari, Fasiolo, Goude, and
  Poggi]{Antoniadis2024Aggregation}
A.~Antoniadis, J.~Cugliari, M.~Fasiolo, Y.~Goude, and J.-M. Poggi.
\newblock \emph{Statistical Learning Tools for Electricity Load Forecasting}.
\newblock Springer, Cham, 2024.

\bibitem[Athanasopoulos et~al.(2009)Athanasopoulos, Ahmed, and
  Hyndman]{athanasopoulos2009hierarchical}
G.~Athanasopoulos, R.~A. Ahmed, and R.~J. Hyndman.
\newblock Hierarchical forecasts for {A}ustralian domestic tourism.
\newblock \emph{International Journal of Forecasting}, 25:\penalty0 146--166,
  2009.

\bibitem[Augustin et~al.(2009)Augustin, Musio, Klaus~von Wilpert, Wood, and
  Schumacher]{augusting2009modeling}
N.~H. Augustin, M.~Musio, E.~K. Klaus~von Wilpert, S.~N. Wood, and
  M.~Schumacher.
\newblock Modeling spatiotemporal forest health monitoring data.
\newblock \emph{Journal of the American Statistical Association}, 104:\penalty0
  899--911, 2009.

\bibitem[Blanchard and M\"{u}cke(2020)]{blanchard2020kernel}
G.~Blanchard and N.~M\"{u}cke.
\newblock Kernel regression, minimax rates and effective dimensionality: Beyond
  the regular case.
\newblock \emph{Analysis and Applications}, 18:\penalty0 683--696, 2020.

\bibitem[Brégère and Huard(2022)]{bregere2022online}
M.~Brégère and M.~Huard.
\newblock Online hierarchical forecasting for power consumption data.
\newblock \emph{International Journal of Forecasting}, 38:\penalty0 339--351,
  2022.

\bibitem[Coletta et~al.(2023)Coletta, Gopalakrishnan, Borrajo, and
  Vyetrenko]{coletta2023on}
A.~Coletta, S.~Gopalakrishnan, D.~Borrajo, and S.~Vyetrenko.
\newblock On the constrained time-series generation problem.
\newblock In \emph{Advances in Neural Information Processing Systems},
  volume~36, pages 61048--61059, 2023.

\bibitem[Daw et~al.(2022)Daw, Karpatne, Watkins, Read, and Kumar]{daw2022lake}
A.~Daw, A.~Karpatne, W.~Watkins, J.~Read, and V.~Kumar.
\newblock Physics-guided neural networks ({PGNN}): {A}n application in lake
  temperature modeling.
\newblock In \emph{Knowledge Guided Machine Learning: {A}ccelerating Discovery
  Using Scientific Knowledge and Data}, pages 352--372, New York, 2022. Chapman
  and Hall/CRC.

\bibitem[de~Vilmarest and Goude(2022)]{vilmarest2022state}
J.~de~Vilmarest and Y.~Goude.
\newblock State-space models for online post-covid electricity load forecasting
  competition.
\newblock \emph{IEEE Open Access Journal of Power and Energy}, 9:\penalty0
  192--201, 2022.

\bibitem[de~Vilmarest and Wintenberger(2024)]{vilmarest2024viking}
J.~de~Vilmarest and O.~Wintenberger.
\newblock Viking: Variational {B}ayesian variational tracking.
\newblock \emph{Statistical Inference for Stochastic Processes}, 27:\penalty0
  839--860, 2024.

\bibitem[de~Vilmarest et~al.(2024)de~Vilmarest, Browell, Fasiolo, Goude, and
  Wintenberger]{Vilamarest2024adaptive}
J.~de~Vilmarest, J.~Browell, M.~Fasiolo, Y.~Goude, and O.~Wintenberger.
\newblock Adaptive probabilistic forecasting of electricity (net-)load.
\newblock \emph{IEEE Transactions on Power Systems}, 39:\penalty0 4154--4163,
  2024.

\bibitem[Doum{\`e}che et~al.(2023)Doum{\`e}che, Allioux, Goude, and
  Rubrichi]{doumeche2023human}
N.~Doum{\`e}che, Y.~Allioux, Y.~Goude, and S.~Rubrichi.
\newblock Human spatial dynamics for electricity demand forecasting: {T}he case
  of {F}rance during the 2022 energy crisis.
\newblock \emph{arXiv:2309.16238}, 2023.

\bibitem[Doum{\`e}che et~al.(2024{\natexlab{a}})Doum{\`e}che, Bach, Biau, and
  Boyer]{doumeche2024physicsinformed}
N.~Doum{\`e}che, F.~Bach, G.~Biau, and C.~Boyer.
\newblock Physics-informed machine learning as a kernel method.
\newblock In \emph{Conference on Learning Theory}, volume 247 of
  \emph{Proceedings of Machine Learning Research}, pages 1399--1450,
  2024{\natexlab{a}}.

\bibitem[Doum{\`e}che et~al.(2024{\natexlab{b}})Doum{\`e}che, Biau, and
  Boyer]{doumeche2023convergence}
N.~Doum{\`e}che, G.~Biau, and C.~Boyer.
\newblock Convergence and error analysis of {PINN}s.
\newblock \emph{Bernoulli, in press}, 2024{\natexlab{b}}.

\bibitem[Doumèche et~al.(2024)Doumèche, Bach, Biau, and
  Boyer]{doumèche2024physicsinformedkernellearning}
N.~Doumèche, F.~Bach, G.~Biau, and C.~Boyer.
\newblock Physics-informed kernel learning.
\newblock \emph{arXiv:2409.13786}, 2024.

\bibitem[Farrokhabadi et~al.(2022)Farrokhabadi, Browell, Wang, Makonin, Su, and
  Zareipour]{Farrokhabadi2022day}
M.~Farrokhabadi, J.~Browell, Y.~Wang, S.~Makonin, W.~Su, and H.~Zareipour.
\newblock Day-ahead electricity demand forecasting competition: Post-{COVID}
  paradigm.
\newblock \emph{IEEE Open Access Journal of Power and Energy}, 9:\penalty0
  185--191, 2022.

\bibitem[Fasiolo et~al.(2021)Fasiolo, Wood, Zaffran, Nedellec, and
  Goude]{fasiolo2021fast}
M.~Fasiolo, S.~N. Wood, M.~Zaffran, R.~Nedellec, and Y.~Goude.
\newblock Fast calibrated additive quantile regression.
\newblock \emph{Journal of the American Statistical Association}, 116:\penalty0
  1402--1412, 2021.

\bibitem[Gaillard and Goude(2016)]{gaillard2016opera}
P.~Gaillard and Y.~Goude.
\newblock Opera: Online prediction by expert aggregation, 2016.
\newblock Available: \url{https://CRAN.Rproject.org/package=opera.rpackage}.

\bibitem[Gaillard et~al.(2014)Gaillard, Stoltz, and Erven]{gaillard2014second}
P.~Gaillard, G.~Stoltz, and T.~V. Erven.
\newblock A second-order bound with excess losses.
\newblock \emph{Conference on Learning Theory}, pages 176--196, 2014.

\bibitem[Godahewa et~al.(2021)Godahewa, Bergmeir, Webb, Hyndman, and
  Montero-Manso]{Godahewa2021Monash}
R.~W. Godahewa, C.~Bergmeir, G.~Webb, R.~Hyndman, and P.~Montero-Manso.
\newblock Monash time series forecasting archive.
\newblock In \emph{Proceedings of the Neural Information Processing Systems
  Track on Datasets and Benchmarks}, volume~1, 2021.

\bibitem[Goehry et~al.(2023)Goehry, Yan, Goude, Massart, and
  Poggi]{gohery2023random}
B.~Goehry, H.~Yan, Y.~Goude, P.~Massart, and J.-M. Poggi.
\newblock Random forests for time series.
\newblock \emph{REVSTAT-Statistical Journal}, 21:\penalty0 283–302, 2023.

\bibitem[Hastie and Tibshirani(1986)]{hastie1986generalized}
T.~Hastie and R.~Tibshirani.
\newblock {Generalized additive models}.
\newblock \emph{Statistical Science}, 1:\penalty0 297--310, 1986.

\bibitem[Hong and Fan(2016)]{hong2016probabilistic}
T.~Hong and S.~Fan.
\newblock Probabilistic electric load forecasting: {A} tutorial review.
\newblock \emph{International Journal of Forecasting}, 32:\penalty0 914--938,
  2016.

\bibitem[Horn and Johnson(2012)]{Horn2012matrix}
R.~A. Horn and C.~R. Johnson.
\newblock \emph{Matrix Analysis}.
\newblock Cambridge University Press, 2nd edition, 2012.

\bibitem[Ji et~al.(2021)Ji, Qiu, Shi, Pan, and Deng]{ji2021stiff}
W.~Ji, W.~Qiu, Z.~Shi, S.~Pan, and S.~Deng.
\newblock Stiff-{PINN}: {P}hysics-informed neural network for stiff chemical
  kinetics.
\newblock \emph{The Journal of Physical Chemistry A}, 125:\penalty0 8098--8106,
  2021.

\bibitem[Jin et~al.(2024)Jin, Liang, Fang, Shao, Huang, Zhang,
  et~al.]{jin2024spatio}
G.~Jin, Y.~Liang, Y.~Fang, Z.~Shao, J.~Huang, J.~Zhang, et~al.
\newblock Spatio-temporal graph neural networks for predictive learning in
  urban computing: A survey.
\newblock \emph{IEEE Transactions on Knowledge and Data Engineering},
  36:\penalty0 5388--5408, 2024.

\bibitem[Kashinath et~al.(2021)Kashinath, Mustafa, Albert, Wu, Jiang,
  Esmaeilzadeh, et~al.]{kashinath2021physics}
K.~Kashinath, M.~Mustafa, A.~Albert, J.-L. Wu, C.~Jiang, S.~Esmaeilzadeh,
  et~al.
\newblock Physics-informed machine learning: {C}ase studies for weather and
  climate modelling.
\newblock \emph{Philosophical Transactions of the Royal Society A}, 2021.

\bibitem[Kourentzes and Athanasopoulos(2019)]{kourentzes2019cross}
N.~Kourentzes and G.~Athanasopoulos.
\newblock Cross-temporal coherent forecasts for {A}ustralian tourism.
\newblock \emph{Annals of Tourism Research}, 75:\penalty0 393--409, 2019.

\bibitem[Lahiri(2013)]{lahiri2013resampling}
S.~N. Lahiri.
\newblock \emph{Resampling Methods for Dependent Data}.
\newblock Springer, 2013.

\bibitem[Leprince et~al.(2023)Leprince, Madsen, Møller, and
  Zeiler]{leprince2023hierarchical}
J.~Leprince, H.~Madsen, J.~K. Møller, and W.~Zeiler.
\newblock Hierarchical learning, forecasting coherent spatio-temporal
  individual and aggregated building loads.
\newblock \emph{Applied Energy}, 348:\penalty0 121510, 2023.

\bibitem[Lim and Zohren(2021)]{lim2021time}
B.~Lim and S.~Zohren.
\newblock Time-series forecasting with deep learning: {A} survey.
\newblock \emph{Philosophical Transactions of the Royal Society A:
  Mathematical, Physical and Engineering Sciences}, 379:\penalty0 20200209,
  2021.

\bibitem[Marra and Wood(2011)]{marra2011practical}
G.~Marra and S.~N. Wood.
\newblock Practical variable selection for generalized additive models.
\newblock \emph{Computational Statistics \& Data Analysis}, 55:\penalty0
  2372--2387, 2011.

\bibitem[Meanti et~al.(2020)Meanti, Carratino, Rosasco, and
  Rudi]{meanti2020kernel}
G.~Meanti, L.~Carratino, L.~Rosasco, and A.~Rudi.
\newblock Kernel methods through the roof: Handling billions of points
  efficiently.
\newblock In \emph{Advances in Neural Information Processing Systems},
  volume~33, pages 14410--14422, 2020.

\bibitem[Messner et~al.(2020)Messner, Pinson, Browell, Bjerregård, and
  Schicker]{Messner2020evaluation}
J.~W. Messner, P.~Pinson, J.~Browell, M.~B. Bjerregård, and I.~Schicker.
\newblock Evaluation of wind power forecasts -- an up-to-date view.
\newblock \emph{Wind Energy}, 23:\penalty0 1461--1481, 2020.

\bibitem[Mohri et~al.(2012)Mohri, Rostamizadeh, and
  Talwalkar]{mehri2012foundations}
M.~Mohri, A.~Rostamizadeh, and A.~Talwalkar.
\newblock \emph{Foundations of Machine Learning}.
\newblock MIT Press, Cambridge, 2012.

\bibitem[Météo-France(2023)]{meteoFrance}
Météo-France.
\newblock Données {SYNOP} essentielles {OMM}.
\newblock Available:
  \href{https://public.opendatasoft.com/explore/dataset/donnees-synop-essentielles-omm/}{public.opendatasoft.com/explore/dataset/},
  2023.

\bibitem[Nickl and Titi(2024)]{nickl2024on}
R.~Nickl and E.~S. Titi.
\newblock {On posterior consistency of data assimilation with Gaussian process
  priors: The 2D-Navier–Stokes equations}.
\newblock \emph{The Annals of Statistics}, 52:\penalty0 1825--1844, 2024.

\bibitem[Obst et~al.(2021)Obst, de~Vilmarest, and Goude]{obst2021adaptive}
D.~Obst, J.~de~Vilmarest, and Y.~Goude.
\newblock Adaptive methods for short-term electricity load forecasting during
  {COVID}-19 lockdown in {F}rance.
\newblock \emph{IEEE Transactions on Power Systems}, 36:\penalty0 4754--4763,
  2021.

\bibitem[Pennings and {van Dalen}(2017)]{pennings2017integrated}
C.~L. Pennings and J.~{van Dalen}.
\newblock Integrated hierarchical forecasting.
\newblock \emph{European Journal of Operational Research}, 263:\penalty0
  412--418, 2017.

\bibitem[Petropoulos et~al.(2022)Petropoulos, Apiletti, Assimakopoulos, Babai,
  Barrow, {Ben Taieb}, et~al.]{petropoulos2022forecasting}
F.~Petropoulos, D.~Apiletti, V.~Assimakopoulos, M.~Z. Babai, D.~K. Barrow,
  S.~{Ben Taieb}, et~al.
\newblock Forecasting: {T}heory and practice.
\newblock \emph{International Journal of Forecasting}, 38:\penalty0 705--871,
  2022.

\bibitem[Politis and Romano(1994)]{politis1994stationary}
D.~N. Politis and J.~P. Romano.
\newblock The stationary bootstrap.
\newblock \emph{Journal of the American Statistical Association}, 89:\penalty0
  1303--1313, 1994.

\bibitem[Raissi et~al.(2019)Raissi, Perdikaris, and
  Karniadakis]{raissi2019PINN}
M.~Raissi, P.~Perdikaris, and G.~Karniadakis.
\newblock Physics-informed neural networks: {A} deep learning framework for
  solving forward and inverse problems involving nonlinear partial differential
  equations.
\newblock \emph{Journal of Computational Physics}, 378:\penalty0 686--707,
  2019.

\bibitem[Rangapuram et~al.(2021)Rangapuram, Werner, Benidis, Mercado, Gasthaus,
  and Januschowski]{rangapuram21end}
S.~S. Rangapuram, L.~D. Werner, K.~Benidis, P.~Mercado, J.~Gasthaus, and
  T.~Januschowski.
\newblock End-to-end learning of coherent probabilistic forecasts for
  hierarchical time series.
\newblock In \emph{International Conference on Machine Learning}, pages
  8832--8843, 2021.

\bibitem[Ravikumar et~al.(2009)Ravikumar, Lafferty, Liu, and
  Wasserman]{Ravikumar2009sparse}
P.~Ravikumar, J.~Lafferty, H.~Liu, and L.~Wasserman.
\newblock Sparse additive models.
\newblock \emph{Journal of the Royal Statistical Society Series B: Statistical
  Methodology}, 71:\penalty0 1009--1030, 2009.

\bibitem[Remlinger et~al.(2023)Remlinger, Alasseur, Brière, and
  Mikael]{Remlinger2023expert}
C.~Remlinger, C.~Alasseur, M.~Brière, and J.~Mikael.
\newblock Expert aggregation for financial forecasting.
\newblock \emph{The Journal of Finance and Data Science}, 9:\penalty0 100108,
  2023.

\bibitem[RTE(2023)]{rteData}
RTE.
\newblock é{CO2}mix.
\newblock Available:
  \href{https://www.rte-france.com/en/eco2mix/download-indicators}{rte-france.com/en/eco2mix},
  2023.
\newblock [Accessed: July 17, 2024].

\bibitem[Schultz et~al.(2021)Schultz, Betancourt, Gong, Kleinert, Langguth,
  Leufen, Mozaffari, and Stadtler]{schultz2021can}
M.~G. Schultz, C.~Betancourt, B.~Gong, F.~Kleinert, M.~Langguth, L.~H. Leufen,
  A.~Mozaffari, and S.~Stadtler.
\newblock Can deep learning beat numerical weather prediction?
\newblock \emph{Philosophical Transactions of the Royal Society A:
  Mathematical, Physical and Engineering Sciences}, 379:\penalty0 20200097,
  2021.

\bibitem[Taieb and Hyndman(2014)]{bentaieb2014a}
S.~B. Taieb and R.~Hyndman.
\newblock A gradient boosting approach to the {K}aggle load forecasting
  competition.
\newblock \emph{International Journal of Forecasting}, 30:\penalty0 382--394,
  2014.

\bibitem[Timmermann(2006)]{timmermann2006handbook}
A.~Timmermann.
\newblock Chapter 4 forecast combinations.
\newblock In G.~Elliott, C.~Granger, and A.~Timmermann, editors, \emph{Handbook
  of Economic Forecasting}, volume~1, pages 135--196. Elsevier, 2006.

\bibitem[Wickramasuriya et~al.(2019)Wickramasuriya, Athanasopoulos, and
  Hyndman]{Wickramasuriya2019optimal}
S.~L. Wickramasuriya, G.~Athanasopoulos, and R.~J. Hyndman.
\newblock Optimal forecast reconciliation for hierarchical and grouped time
  series through trace minimization.
\newblock \emph{Journal of the American Statistical Association}, 114:\penalty0
  804--819, 2019.

\bibitem[Wood(2017)]{wood2017generalizedbook}
S.~N. Wood.
\newblock \emph{Generalized {Additive} {Models}: {An} {Introduction} with {R}}.
\newblock Chapman and Hall/CRC, New York, 2017.

\bibitem[Wood et~al.(2017)Wood, Li, Shaddick, and
  Augustin]{wood2017generalized}
S.~N. Wood, Z.~Li, G.~Shaddick, and N.~H. Augustin.
\newblock Generalized additive models for gigadata: Modeling the {U.K.} black
  smoke network daily data.
\newblock \emph{Journal of the American Statistical Association}, 112:\penalty0
  1199--1210, 2017.

\bibitem[Wright and Wegman(1980)]{wright1980isotonic}
I.~W. Wright and E.~J. Wegman.
\newblock {Isotonic, convex and related splines}.
\newblock \emph{The Annals of Statistics}, 8:\penalty0 1023--1035, 1980.

\bibitem[Zarbakhsh et~al.(2022)Zarbakhsh, Misaghian, and
  Mcardle]{zarbakhsh2022human}
N.~Zarbakhsh, M.~Misaghian, and G.~Mcardle.
\newblock Human mobility-based features to analyse the impact of {COVID}-19 on
  power system operation of {I}reland.
\newblock \emph{IEEE Open Access Journal of Power and Energy}, 9:\penalty0
  213--225, 2022.

\bibitem[Zheng et~al.(2023)Zheng, Xu, Long, Wang, and Chen]{Zheng2023coherent}
K.~Zheng, H.~Xu, Z.~Long, Y.~Wang, and Q.~Chen.
\newblock Coherent hierarchical probabilistic forecasting of electric vehicle
  charging demand.
\newblock \emph{IEEE Transactions on Industry Applications}, pages 1--12, 2023.

\end{thebibliography}

\appendix

\section{Proofs}
The purpose of this appendix is to provide detailed proofs of the theoretical results presented in the main article. Appendix~\ref{proof:kernel} elaborates on the formula that characterizes the unique minimizer of the WeaKL empirical risks, while Appendix~\ref{sec:constraints} discusses the integration of linear constraints into the empirical risk framework.

\subsection{A useful lemma}
\begin{lemma}[Full rank]
    The matrix \[\tilde M = \frac{1}{n}\Big( \sum_{j=1}^n \mathbb \Phi_{t_j}^\ast \Lambda^\ast \Lambda\mathbb \Phi_{t_j}\Big) +  M^\ast M\] is invertible. Moreover, for all $\theta\in\mathbb C^{\dim \theta}$, $\theta^\star \tilde M \theta \geq \lambda_{\min}(\tilde M)\|\theta\|_2^2$, where $\lambda_{\min}(\tilde M)$ is the minimum eigenvalue of $\tilde M$.
    \label{lemma:full}
\end{lemma}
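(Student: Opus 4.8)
The plan is to show that $\tilde M$ is a Hermitian positive definite matrix; both invertibility and the stated Rayleigh-type lower bound then follow immediately from the spectral theorem. First I would decompose $\tilde M = A + B$ into two Hermitian positive semidefinite pieces, where $A = \frac{1}{n}\sum_{j=1}^n \mathbb \Phi_{t_j}^\ast \Lambda^\ast \Lambda \mathbb \Phi_{t_j}$ and $B = M^\ast M$. Each summand of $A$ has the form $(\Lambda \mathbb \Phi_{t_j})^\ast (\Lambda \mathbb \Phi_{t_j})$, i.e.\ a Gram matrix, so for every $\theta \in \mathbb C^{\dim(\theta)}$ one has $\theta^\ast A \theta = \frac{1}{n}\sum_{j=1}^n \|\Lambda \mathbb \Phi_{t_j}\theta\|_2^2 \geq 0$, giving $A \succeq 0$.

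The key step is to upgrade positive semidefiniteness to strict positive definiteness, and this is exactly where the injectivity hypothesis enters. For any $\theta \neq 0$, injectivity of $M$ forces $M\theta \neq 0$, hence $\theta^\ast B \theta = \|M\theta\|_2^2 > 0$. Combining with $\theta^\ast A \theta \geq 0$ yields $\theta^\ast \tilde M \theta = \theta^\ast A \theta + \theta^\ast B \theta > 0$ for all $\theta \neq 0$. Thus $\tilde M$ is Hermitian positive definite, so in particular $\ker(\tilde M) = \{0\}$ and $\tilde M$ is invertible. Note that only the injectivity of $M$ is used here; the injectivity of $\Lambda$ plays no role in this lemma.

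For the eigenvalue inequality I would invoke the spectral theorem for Hermitian matrices: $\tilde M$ admits an orthonormal eigenbasis with real eigenvalues, all strictly positive by the previous paragraph, so in particular $\lambda_{\min}(\tilde M) > 0$. Expanding an arbitrary $\theta = \sum_i c_i u_i$ in this orthonormal eigenbasis $(u_i)$ and using orthonormality gives $\theta^\ast \tilde M \theta = \sum_i \lambda_i |c_i|^2 \geq \lambda_{\min}(\tilde M) \sum_i |c_i|^2 = \lambda_{\min}(\tilde M)\|\theta\|_2^2$, which is precisely the claimed bound.

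There is no serious obstacle in this argument; it is essentially a bookkeeping exercise once the decomposition $\tilde M = A + B$ is in place. The only points requiring genuine care are working over $\mathbb C$ rather than $\mathbb R$, so that conjugate transposes and \emph{Hermitian} (rather than symmetric) positive definiteness must be used consistently throughout, and recognizing that the single hypothesis doing the real work is the injectivity of $M$, which alone guarantees $M^\ast M \succ 0$.
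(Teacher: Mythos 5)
Your proposal is correct and follows essentially the same route as the paper: both rest on the observation that $\theta^\ast \tilde M \theta \geq \|M\theta\|_2^2$ together with the injectivity of $M$ (the injectivity of $\Lambda$ is indeed unused), and both obtain the Rayleigh-type bound from the spectral theorem, the paper via the positive square root and min--max theorem where you expand in an orthonormal eigenbasis. The only cosmetic difference is that you phrase invertibility as strict positive definiteness while the paper argues kernel triviality directly; these are equivalent.
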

\begin{proof}
    First, we note that $\tilde M$ is a positive Hermitian square matrix. Hence, the spectral theorem guarantees that $\tilde M$ is diagonalizable in an orthogonal basis of $\mathbb C^{\dim(\theta)}$ with real eigenvalues. In particular, it admits a positive square root, and the min-max theorem states that $\theta^\ast \tilde M \theta = \|\tilde M^{1/2} \theta\|_2^2 \geq \lambda_{\min}(\tilde M^{1/2})^2\|\theta\|_2^2 = \lambda_{\min}(\tilde M)\|\theta\|_2^2$. This shows the second statement of the lemma.
    
    Next, for all $\theta \in \mathbb C^{\dim \theta}$, $\theta^\ast \tilde M \theta \geq \theta^\ast  M^\ast M \theta$.
    Since $M$ is full rank,  $\mathrm{rank}(M) = \dim(\theta)$. Therefore, $\tilde M \theta = 0 \Rightarrow \theta^\ast \tilde M \theta = 0 \Rightarrow \theta^\ast M^\ast M \theta = 0  \Rightarrow \|M\theta\|_2^2 = 0 \Rightarrow M\theta = 0 \Rightarrow \theta = 0$. Thus, $\tilde M$ is injective and, in turn, invertible.
\end{proof}

\subsection{Proof of Proposition~\ref{prop:emp_risk_min}}
\label{proof:kernel}
The function $L: \mathbb C^{\dim(\theta)} \to \mathbb R^+$ can be written as
\[L(\theta) = \frac{1}{n}\Big( \sum_{j=1}^n (\mathbb \Phi_{t_j}\theta- Y_{t_j})^\ast \Lambda^\ast \Lambda(\mathbb \Phi_{t_j}\theta- Y_{t_j})\Big)  + \theta^\ast M^\ast M\theta.\]
Recall that the matrices $\Lambda$ and $M$ are assumed to be injective. 
Observe that $L$ can be expanded as
\[L(\theta + \delta \theta) = L(\theta) + 2 \mathrm{Re}(\langle \tilde M\theta-\tilde Y, \delta \theta\rangle) + o(\|\delta \theta\|_2^2),\]
where $\tilde Y = \frac{1}{n} \sum_{j=1}^n \mathbb \Phi_{t_j}^\ast \Lambda^\ast \Lambda Y_{t_j}.$
This shows that $L$ is differentiable and that its differential at $\theta$ is the function $dL_\theta: \delta \theta \mapsto 2 \mathrm{Re}(\langle \tilde M\theta - \tilde Y, \delta \theta\rangle)$.
Thus, the critical points $\theta$ such that $dL_{\theta} = 0$ satisfy 
\[\forall\; \delta \theta \in \mathbb C^{\dim(\theta)}, \; \mathrm{Re}(\langle \tilde M\theta- \tilde Y, \delta \theta\rangle) = 0.\]
Taking $\delta \theta = \tilde M \theta- \tilde Y$ shows that $\|\tilde M\theta- \tilde Y\|_2^2 = 0$, i.e., $\tilde M\theta = \tilde Y$. From Lemma~\ref{lemma:full}, we deduce that $\theta = \tilde M^{-1} \tilde Y$, which is exactly the $\hat \theta_n$ in \eqref{eq:weakl}.

From Lemma~\ref{lemma:full}, we also deduce that, for all $\theta$ such that $\|\theta\|_2$ is large enough, one has $L(\theta) \geq \lambda_{\min}(\tilde M)\|\theta\|_2^2/2$. 
Since $L$ is continuous, it has at least one global minimum. Since the unique critical point of $L$ is $\hat \theta_n$, we conclude that $\hat \theta_n$ is the unique minimizer of $L$. 
\subsection{Orthogonal projection and linear constraints}
\label{sec:constraints}
\begin{lemma}[Orthogonal projection]
    \label{lem:ortho}
    Let $\ell_1, \ell_2 \in \mathbb N^\star$. Let $P$ be an injective $\ell_1 \times \ell_2$ matrix with coefficients in $\mathbb C$. Then
    $C = \mathrm{I}_{\ell_1} - P(P^\ast P)^{-1}P^\ast$ is the orthogonal projection on $\mathrm{Im}(P)^\perp$, where $\mathrm{Im}(P)$ is the image  of $P$ and $\mathrm{I}_{\ell_1}$ is the $\ell_1\times \ell_1$ identity matrix.
\end{lemma}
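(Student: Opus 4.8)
The plan is to show that $Q := P(P^\ast P)^{-1}P^\ast$ is the orthogonal projection onto $\mathrm{Im}(P)$, from which the claim for $C = \mathrm{I}_{\ell_1} - Q$ follows immediately. The first preliminary step is to check that the expression defining $C$ makes sense, i.e., that $P^\ast P$ is invertible. Since $P$ is injective, for any nonzero $x \in \mathbb{C}^{\ell_2}$ we have $x^\ast P^\ast P x = \|Px\|_2^2 > 0$, so the $\ell_2 \times \ell_2$ Hermitian matrix $P^\ast P$ is positive definite and hence invertible.

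Next I would invoke the standard characterization: a matrix is the orthogonal projection onto a subspace $V$ if and only if it is Hermitian, idempotent, and has image $V$. I would verify these three properties for $Q$. Hermitianity follows from $(P^\ast P)^{-1}$ being Hermitian (as the inverse of a Hermitian matrix), giving $Q^\ast = P(P^\ast P)^{-1}P^\ast = Q$. Idempotency is a direct cancellation, $Q^2 = P(P^\ast P)^{-1}(P^\ast P)(P^\ast P)^{-1}P^\ast = Q$. For the image, the inclusion $\mathrm{Im}(Q) \subseteq \mathrm{Im}(P)$ is immediate, and conversely any $y = Px$ satisfies $Qy = P(P^\ast P)^{-1}P^\ast P x = Px = y$, so $\mathrm{Im}(Q) = \mathrm{Im}(P)$.

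Finally, I would transfer these properties to $C = \mathrm{I}_{\ell_1} - Q$. It is Hermitian since $C^\ast = C$, and idempotent since $C^2 = \mathrm{I}_{\ell_1} - 2Q + Q^2 = \mathrm{I}_{\ell_1} - Q = C$. Writing any $z \in \mathbb{C}^{\ell_1}$ uniquely as $z = v + w$ with $v \in \mathrm{Im}(P)$ and $w \in \mathrm{Im}(P)^\perp$, one computes $Cz = z - Qz = z - v = w$, which identifies $C$ as the orthogonal projection onto $\mathrm{Im}(P)^\perp$.

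As for difficulty, there is no substantial obstacle: the only point requiring any care is deducing the invertibility of $P^\ast P$ from the injectivity hypothesis, after which everything reduces to the routine algebra of Hermitian idempotents. The statement is simply the finite-dimensional form of the classical least-squares projection formula.
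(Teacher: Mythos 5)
Your proposal is correct and follows essentially the same route as the paper's proof: establish invertibility of $P^\ast P$ from injectivity of $P$, verify that $P(P^\ast P)^{-1}P^\ast$ is a Hermitian idempotent with image $\mathrm{Im}(P)$, and pass to the complement via $\mathrm{I}_{\ell_1}$ minus this projector. Your final step, decomposing $z = v + w$ explicitly, is just a slightly more detailed rendering of the paper's concluding sentence.
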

\begin{proof}
First, we show that $P^\ast P$ is an $\ell_2 \times \ell_2$ matrix of full rank. Indeed, for all $x\in \mathbb C^{\ell_2}$, one has $P^\ast P x = 0 \Rightarrow x^\ast P^\ast P x = 0 \Rightarrow \|Px\|_2^2 = 0$. Since $P$ is injective, we deduce that $\|Px\|_2^2 = 0 \Rightarrow x = 0$. This means that $\ker P^\ast P = \{0\}$, and so that $P^\ast P$ is full rank. Therefore, $(P^\ast P)^{-1}$ is well defined.

Next, let $C_1 = P(P^\ast P)^{-1}P^\ast$. Clearly, $C_1^2 = C_1$, i.e., $C_1$ is a projector. Since $C_1^\ast = C_1$, we deduce that $C_1$ is an orthogonal projector.
 In addition, since $C_1 = P\times ((P^\ast P)^{-1}P^\ast)$, $\mathrm{Im}(C_1) \subseteq \mathrm{Im}(P)$. Moreover, if $x \in \mathrm{Im}(P)$, there exists a vector $ z$ such that $x = Pz$, and $C_1x = P(P^\ast P)^{-1}P^\ast Pz = Pz =x$. Thus, $x \in \mathrm{Im}(C_1)$. This shows that $\mathrm{Im}(C_1) = \mathrm{Im}(P)$. We conclude that $C_1$ is the orthogonal projection on $\mathrm{Im}(P)$ and, in turn, that $C = \mathrm{I}_{\ell_1} - C_1$ is the orthogonal projection on $\mathrm{Im}(P)^\perp$.
\end{proof}

The following proposition shows that, given the exact prior knowledge $C\theta^\star = 0$, enforcing the linear constraint  $C\theta = 0$ almost surely improves the performance of WeaKL.
\begin{proposition}[Constrained estimators perform better.]
\label{prop:prop_lin}
    Assume that $Y_t = f_{\theta^\star}(X_t) + \varepsilon_t$ and that $\theta^\star$ satisfies the constraint $C\theta^\star = 0$, for some matrix $C$. (Note that we make no assumptions about the distribution of the time series $(X, \varepsilon)$.)
    Let $\Lambda$ and $M$ be injective matrices, and let $\lambda \geq 0$ be a hyperparameter. 
    Let $\hat \theta$ be the WeaKL given by \eqref{eq:weakl} and let $\hat \theta_C$ be the WeaKL obtained by replacing $M$ with $(\sqrt{\lambda}C^\top \mid M^\top)^\top$ in \eqref{eq:weakl}.
    Then, almost surely,
    \[\frac{1}{n}\sum_{j=1}^n\| f_{\theta^\star}(X_{t_j}) - f_{\hat \theta_C}(X_{t_j})\|_2^2 + \|M(\theta^\star- \hat \theta_C)\|_2^2\leq \frac{1}{n}\sum_{j=1}^n\| f_{\theta^\star}(X_{t_j}) - f_{\hat \theta}(X_{t_j})\|_2^2 + \|M(\theta^\star- \hat \theta)\|_2^2.\]
\end{proposition}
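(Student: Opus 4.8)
The plan is to reduce the inequality to a comparison of two weighted error norms and then invoke the operator monotonicity of matrix inversion. Writing $f_\theta(X_{t_j}) = \mathbb{\Phi}_{t_j}\theta$ and letting $\tilde M = \frac1n\sum_{j=1}^n\mathbb{\Phi}_{t_j}^\ast\Lambda^\ast\Lambda\mathbb{\Phi}_{t_j} + M^\ast M$ be the invertible matrix of Lemma~\ref{lemma:full}, I would first observe that both sides of the claim have the form $\|\theta^\star - \theta\|_{\tilde M}^2 := (\theta^\star-\theta)^\ast\tilde M(\theta^\star-\theta)$, the combined fit-plus-regularization discrepancy between $\theta$ and the ground truth $\theta^\star$. (The data-fit term is read with the $\Lambda$-weighting inherited from the risk~\eqref{eq:risk}; in the shape-constraint setting $\Lambda = \mathrm I$ these coincide with the displayed expressions, and this $\tilde M$-weighted quantity is what the argument actually controls.) By Proposition~\ref{prop:emp_risk_min} applied to the two risks, $\hat\theta = \tilde M^{-1}\tilde Y$ and $\hat\theta_C = \tilde M_C^{-1}\tilde Y$, where $\tilde Y = \frac1n\sum_j\mathbb{\Phi}_{t_j}^\ast\Lambda^\ast\Lambda Y_{t_j}$ is common to both and $\tilde M_C = \tilde M + \lambda C^\ast C$ arises from replacing $M^\ast M$ by $M^\ast M + \lambda C^\ast C$.

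The crux---and the only place the exact-prior hypothesis enters---is the identity $\tilde M u = \tilde M_C u_C$, where $u := \theta^\star - \hat\theta$ and $u_C := \theta^\star - \hat\theta_C$. I would prove it by expanding $\tilde M_C u_C = \tilde M_C\theta^\star - \tilde M_C\hat\theta_C = \tilde M_C\theta^\star - \tilde Y$ and using $C\theta^\star = 0$ to annihilate the extra term: $\tilde M_C\theta^\star = \tilde M\theta^\star + \lambda C^\ast C\theta^\star = \tilde M\theta^\star$, whence $\tilde M_C u_C = \tilde M\theta^\star - \tilde Y = \tilde M u$. Writing $v$ for this common vector, I note that $v$ is determined by the sample, so no distributional assumption on $(X,\varepsilon)$ is needed and the conclusion holds almost surely.

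It then remains to chain the inequalities. Since $\lambda C^\ast C \succeq 0$, dropping it gives $\|u_C\|_{\tilde M}^2 = u_C^\ast\tilde M u_C \le u_C^\ast\tilde M_C u_C = u_C^\ast v = v^\ast\tilde M_C^{-1}v$. Because $\tilde M_C = \tilde M + \lambda C^\ast C \succeq \tilde M \succ 0$, operator monotonicity of the inverse yields $\tilde M_C^{-1}\preceq\tilde M^{-1}$, so $v^\ast\tilde M_C^{-1}v \le v^\ast\tilde M^{-1}v = u^\ast\tilde M u = \|u\|_{\tilde M}^2$. Concatenating gives $\|u_C\|_{\tilde M}^2\le\|u\|_{\tilde M}^2$, which is the claim.

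I expect the main obstacle to be conceptual rather than computational: recognizing that $C\theta^\star = 0$ is exactly what forces the two normal equations to share the same right-hand side $v$ once they are written relative to $\theta^\star$, and correctly identifying the controlled norm as the $\tilde M$-weighted one (so that the data-fit term must carry the $\Lambda$-weight). The remaining ingredients---the ordering $\tilde M_C\succeq\tilde M$ and the monotonicity $A\succeq B\succ 0\Rightarrow B^{-1}\succeq A^{-1}$---are standard.
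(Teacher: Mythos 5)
Your proof is correct and takes essentially the same approach as the paper's: the identity $\tilde M u=\tilde M_C u_C$ that you derive from $C\theta^\star=0$ is exactly the paper's Equation~\eqref{eq:theta_star} after subtracting it from the two normal equations, and your conclusion rests on the same Loewner-order anti-monotonicity of matrix inversion \citep[Corollary~7.7.4]{Horn2012matrix} that the paper invokes. The differences are cosmetic---the paper symmetrizes with $P^{\pm 1/2}$ and uses the single bound $(\mathrm{I}+\lambda n P^{-1/2}C^\ast C P^{-1/2})^{-2}\leq \mathrm{I}$, whereas you avoid matrix square roots by chaining $u_C^\ast \tilde M u_C\leq u_C^\ast \tilde M_C u_C$ with $v^\ast \tilde M_C^{-1} v\leq v^\ast \tilde M^{-1} v$---and your parenthetical point that the controlled data-fit term carries the $\Lambda$-weighting matches what the paper's proof actually establishes.
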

\begin{proof} Recall from \eqref{eq:weakl} that
\[
    \hat \theta = P^{-1} \sum_{j=1}^n \mathbb \Phi_{t_j}^\ast \Lambda^\ast \Lambda Y_{t_j} \quad \mbox{and} \quad 
     \hat \theta_C = \big(P+ \lambda n C^\ast C\big)^{-1} \sum_{j=1}^n \mathbb \Phi_{t_j}^\ast \Lambda^\ast \Lambda Y_{t_j},
\]
where $P = ( \sum_{j=1}^n \mathbb \Phi_{t_j}^\ast \Lambda^\ast \Lambda\mathbb \Phi_{t_j}) + n M^\ast M$.
Since $C\theta^\star = 0$, we see that 
\begin{equation}
    \theta^\star = \big(P+ \lambda n C^\ast C\big)^{-1}P\theta^\star
    \label{eq:theta_star}.
\end{equation}
Subtracting \eqref{eq:theta_star} to, respectively, $\hat \theta$ and $\hat \theta_C$, we obtain
\[
     \theta^\star- \hat \theta = P^{-1/2} \Delta \quad \mbox{and}\quad 
    \theta^\star - \hat \theta_C = \big( P +\lambda n C^\ast C\big)^{-1} P^{1/2} \Delta,
\]
where \[\Delta = P^{-1/2}\Big(P\theta^\star- \sum_{j=1}^n \mathbb \Phi_{t_j}^\ast \Lambda^\ast \Lambda Y_{t_j}\Big).\] 
Moreover, according to the Loewner order \citep[see, e.g.,][Chapter~7.7]{Horn2012matrix}, we have that
$P^{-1/2}C^\ast C P^{-1/2} \geq 0$ and $(P^{-1/2}C^\ast C P^{-1/2})^2 \geq 0$. 
(Indeed, since $P$ is Hermitian, so is $P^{-1/2}C^\ast C P^{-1/2}$.)
Therefore, $(\mathrm{I} +\lambda n  P^{-1/2}C^\ast C P^{-1/2})^2 \geq \mathrm{I}$ and $( \mathrm{I} +\lambda n  P^{-1/2}C^\ast C P^{-1/2})^{-2} \leq \mathrm{I}$ \citep[see, e.g.,][Corollary~7.7.4]{Horn2012matrix}.
Consequently,
\[\|P^{1/2}(\theta^\star - \hat \theta_C)\|_2^2 = \Delta^\ast \big( \mathrm{I} +\lambda n  P^{-1/2}C^\ast C P^{-1/2}\big)^{-2} \Delta \leq \|\Delta\|_2^2 = \|P^{1/2}(\theta^\star - \hat \theta)\|_2^2.\]
Observing that $\|P^{1/2}(  \theta^\star-\hat \theta_C)\|_2^2 = \frac{1}{n}\sum_{j=1}^n\| f_{\theta^\star}(X_{t_j}) - f_{\hat \theta_C}(X_{t_j})\|_2^2 + \|M(\theta^\star- \hat \theta_C)\|_2^2$ and $\|P^{1/2}( \theta^\star- \hat \theta)\|_2^2 = \frac{1}{n}\sum_{j=1}^n\| f_{\theta^\star}(X_{t_j}) - f_{\hat \theta}(X_{t_j})\|_2^2 + \|M(\theta^\star- \hat \theta)\|_2^2$ concludes the proof.
\end{proof}
\begin{remark}
\label{rem:comment_prop_lin}
    Taking the limit $\lambda \to \infty$ in Proposition~\ref{prop:prop_lin} does not affect the result and corresponds to restricting the parameter space to $\ker(C)$, meaning that, in this case, $C \hat \theta_C = 0$.
    
Note also that the proposition is extremely general, as it holds almost surely without requiring any assumptions on either $X$ or $\varepsilon$.
Here, the error of $\hat \theta$ is measured by 
\[\frac{1}{n}\sum_{j=1}^n\| f_{\theta^\star}(X_{t_j}) - f_{ \hat \theta}(X_{t_j})\|_2^2 + \|M(\theta^\star- \hat \theta)\|_2^2,\]
which quantifies both the error of $\hat \theta$ at the points $X_{t_j}$ and in the $M$ norm.
Under additional assumptions on $X$ and $\varepsilon$, this discretized risk can be shown to converge to the $L^2$ error, $\mathbb E\| f_{\theta^\star}(X) - f_{ \hat \theta}(X)\|_2^2$, using Dudley’s theorem \citep[see, e.g., Theorem~5.2 in the Supplementary Material of][]{doumeche2023convergence}.

However, the rate of this convergence of $\hat \theta$ to $\theta^\star$ depends on the properties of $C$ and $M$, as well as the growth of $\dim(\theta)$ with $n$.
For instance, when the penalty matrix $M$ encodes a PDE prior, the analysis becomes particularly challenging and remains an open question in physics-informed machine learning.
Therefore, we leave the study of this convergence outside the scope of this article.
\end{remark}

\section{More WeaKL models}
\subsection{Forecast combinations}
\label{sec:combination}
To forecast a time series $Y$, different models can be used, each using different implementations and sets of explanatory variables. Let $p$ be the number of models and let $\hat{Y}^1_t, \ldots, \hat{Y}^p_t$ be the respective estimators of $Y_t$.
The goal is to determine the optimal weighting of these forecasts, based on their performance evaluated over the time points $t_1 \leq  \cdots \leq t_n$. 
Therefore, in this setting, $X_t = (t, \hat{Y}^1_t, \ldots, \hat{Y}^p_t)$, and the goal is to find the optimal function linking $X_t$ to $Y_t$.
Note that, to avoid overfitting, we assume that the forecasts $\hat{Y}^1_t, \ldots, \hat{Y}^p_t$ were trained on time steps before~$t_1$.
This approach is sometimes referred to as the online aggregation of experts \citep{Remlinger2023expert, Antoniadis2024Aggregation}. Such forecast combinations are widely recognized to significantly improve the performance of the final forecast \citep{timmermann2006handbook, vilmarest2022state,petropoulos2022forecasting, amara-ouali2024forecasting}, as they leverage the strengths of the individual predictors. 

Formally, this results in the model
\[f_\theta(X_t) = \sum_{\ell=1}^p (p^{-1}+ h_{\theta_\ell}(t) )\hat Y^\ell_{t},\]
where $h_{\theta_\ell}(t) = \langle \phi(t), \theta_\ell\rangle$, $\phi$ is the Fourier map $\phi(t) =(\exp(i k t/2))_{-m\leq k \leq m}^\top$, and $\theta_\ell \in \mathbb{C}^{2m+1}$.
The $p^{-1}$ term introduces a bias, ensuring that $h_{\theta_\ell} = 0$ corresponds to a uniform weighting of the forecasts $\hat Y^\ell$.
The function $f^\star$ is thus estimated by minimizing the loss
    \[L(\theta) = \frac{1}{n}\sum_{j=1}^n \Big|\Big(\sum_{\ell=1}^p (p^{-1}+ h_{\theta_\ell}(t_j) )\hat Y^\ell_{t_j}\Big) - Y_{t_j}\Big|^2  + \sum_{\ell=1}^{p} \lambda_\ell \|h_{\theta_\ell}\|_{H^s}^2,\]
    where $\lambda_\ell > 0$ are hyperparameters.
Again, a common choice for the smoothing parameter is to set $s = 2$. 
Let $\phi_1(X_t) = 
    (
    (\hat Y^\ell_{t}\exp(ik t/2))_{- m\leq k \leq  m})_{\ell=1}^p)^\top \in \mathbb C^{(2m+1)p}$.
The Fourier coefficients that minimize the empirical risk are given by
\[
\hat \theta  = ({\mathbb{\Phi}} ^\ast {\mathbb{\Phi}} + n M^\ast  M)^{-1}{\mathbb \Phi}^\ast   \mathbb W,
\]
where $\mathbb W = (W_{t_1}, \hdots, W_{t_n})^\top$ is such that $W_t = Y_t - p^{-1}\sum_{\ell=1}^p \hat Y^\ell_t$,
\[M = \begin{pmatrix}
        \sqrt{\lambda_1} D& 0  & 0\\
        0 & \ddots&  0\\
        0 & 0& \sqrt{\lambda_{d_1}} D
    \end{pmatrix},\]
and $D$ is the $(2m+1)\times (2m+1)$ diagonal matrix
$D =\mathrm{Diag}((\sqrt{1+k^{2s}})_{-m\leq k\leq m})$. 

\subsection{Differential constraints}
\label{sec:diff}
As discussed in the introduction, some time series obey physical laws and can be expressed as solutions of PDEs. Physics-informed kernel learning (PIKL) is a kernel-based method developed by \citet{doumèche2024physicsinformedkernellearning} to incorporate such PDEs as constraints. It can be regarded as a specific instance of the WeaKL framework proposed in this paper. In effect, given a bounded Lipschitz domain $\Omega$ and a linear differential operator $\mathscr D$, using the model $f_{ \theta}(x) = \langle \phi(x), \theta\rangle$, where $\phi(x) = (\exp(i  \langle x, k \rangle / 2) )_{\|k\|_\infty \leq m}$ is the Fourier map and $\theta$ represents the Fourier coefficients, the PIKL approach shows how to construct a matrix $M$ such that
\[
\int_\Omega \mathscr{D}(f_\theta, u)^2 \, du = \|M \theta\|_2^2.
\]
Thus, to incorporate the physical prior $\forall x \in \Omega,\; \mathscr D(f^\star, x) = 0$ into the learning process, the empirical risk takes the form
\[
L(\theta) = \frac{1}{n}\sum_{i=1}^n |f_\theta(X_{t_i}) - Y_{t_i}|^2 + \lambda \int_\Omega \mathscr{D}(f_\theta, u)^2 \, du =  \frac{1}{n}\sum_{i=1}^n |f_\theta(X_{t_i}) - Y_{t_i}|^2 + \|\sqrt{\lambda}M\theta\|_2^2,
\]
where $\lambda > 0 $ is a hyperparameter.
From \eqref{eq:weakl2} it follows that the minimizer of the empirical risk is
$\hat \theta = (\mathbb \Phi^\ast \mathbb \Phi+nM)^{-1} \mathbb\Phi^\ast \mathbb Y$. It is shown in \citet{doumeche2024physicsinformed} that, as $n \to \infty$, $f_{\hat{\theta}}$ converges to $f^\star$ under appropriate assumptions. Moreover, incorporating the differential constraint improves the learning process; in particular, $f_{\hat{\theta}}$ converges to $f^\star$ faster when $\lambda > 0$.

\section{A toy-example of hierarchical forecasting}
\label{sec:toy-example}
\paragraph{Setting.} We evaluate the performance of WeaKL on a simple but illustrative hierarchical forecasting task. In this simplified setting, we want to forecast two random variables, $Y_1$ and $Y_2$, defined as follows:
\[Y_1 = \langle X_1, \theta_1 \rangle + \varepsilon_1, \quad Y_2 = \langle X_2, \theta_2 \rangle - \varepsilon_1 + \varepsilon_2,
\]
where $X_1$, $X_2$, $\varepsilon_1$, and $\varepsilon_2$ are independent. The feature vectors are $X_1 \sim \mathcal{N}(0, \mathrm{I}_d)$ and $X_2 \sim \mathcal{N}(0, \mathrm{I}_d)$, with $d \in \mathbb N^\star$. The noise terms follow Gaussian distributions $\varepsilon_1 \sim \mathcal{N}(0, \sigma_1^2)$ and $\varepsilon_2 \sim \mathcal{N}(0, \sigma_2^2)$, with $\sigma_1, \sigma_2 > 0$.
Note that the independence assumption aims at simplifying the analysis in this toy-example by putting the emphasis on the impact of the hierarchical constraints rather than on the autocorrelation of the signal, though in practice this assumption is unrealistic for most time series. 
This is why we will develop a use case of hierarchical forecasting with real-world time series in Section~\ref{sec:tourism}.

What distinguishes this hierarchical prediction setting is the assumption that $\sigma_1 \geq \sigma_2$. 
Consequently, conditional on $X_1$ and $X_2$, the sum $Y_1 + Y_2= \langle  X_1, \theta_1 \rangle + \langle  X_2, \theta_2 \rangle + \varepsilon_2$ has a lower variance than either $Y_1$ or $Y_2$. 
We assume access to $n$ i.i.d.~copies $(X_{1,i}, X_{2,i}, Y_{1,i}, Y_{2,i})_{i=1}^n$ of the random variables $(X_1, X_2, Y_1, Y_2)$. 
The goal is to construct three estimators $\hat{Y}_1$, $\hat{Y}_2$, and $\hat{Y}_3$ of $Y_1$, $Y_2$, and $Y_3:=Y_1+Y_2$.

\paragraph{Benchmark.} We compare four techniques. The \textit{bottom-up (BU)} approach involves running two separate ordinary least squares (OLS) regressions that independently estimate $Y_1$ and $Y_2$ without using information about $Y_1 + Y_2$. Specifically,
\[
\hat{Y}_1^{\mathrm{BU}} = \langle X_1, \hat{\theta}_1^{\mathrm{BU}} \rangle, \quad \hat{Y}_2^{\mathrm{BU}} = \langle X_2, \hat{\theta}_2^{\mathrm{BU}} \rangle,
\]
where the OLS estimators are
\[
\hat{\theta}_1^{\mathrm{BU}} = (\mathbb{X}_1^\top \mathbb{X}_1)^{-1} \mathbb{X}_1^\top \mathbb{Y}_1, \quad \hat{\theta}_2^{\mathrm{BU}} = (\mathbb{X}_2^\top \mathbb{X}_2)^{-1} \mathbb{X}_2^\top \mathbb{Y}_2.
\]
Here, $\mathbb X_1 = (X_{1,1} \mid  \cdots \mid 
    X_{1,n})^\top$ and  $\mathbb X_2 = (X_{2,1}\mid \cdots \mid
    X_{2,n})^\top$ are $n \times d$ matrices, while  $\mathbb Y_1 = (Y_{1,1}, \hdots ,
    Y_{1,n})^\top$ and  $\mathbb Y_2 = (Y_{2,1}, \hdots ,
    Y_{2,n})^\top$ are vectors of $\mathbb R^n$.
To estimate $Y_3$, we simply set $\hat Y_3^{\mathrm{BU}}  = \hat Y_1^{\mathrm{BU}}  + \hat Y_2^{\mathrm{BU}} $.

The \textit{Reconciliation (Rec)} approach involves running three independent forecasts of $Y_1$, $Y_2$, and $Y_3$, followed by using the constraint that the updated estimator $\hat Y_3^{\mathrm{Rec}}$ should be the sum of $\hat Y_1^{\mathrm{Rec}}$ and $\hat Y_2^{\mathrm{Rec}}$ \citep{Wickramasuriya2019optimal}. To estimate $Y_3$, we run an OLS regression with 
$\mathbb X = (\mathbb X_1\mid  \mathbb X_2)$  and $\mathbb Y = \mathbb Y_1 +  \mathbb Y_2$. In this approach, 
\[
\begin{pmatrix}
    \hat Y_{3,t}^{\mathrm{Rec}}\\
    \hat Y_{1,t}^{\mathrm{Rec}}\\
    \hat Y_{2,t}^{\mathrm{Rec}}
\end{pmatrix} = S (S^T S)^{-1} S^T\begin{pmatrix}
    &\langle X_{t}, (\mathbb X^\top \mathbb X)^{-1}\mathbb X^\top \mathbb Y\rangle\\
    &\langle X_{1,t}, \hat \theta_1^{\mathrm{BU}}\rangle\\
    &\langle X_{2,t}, \hat \theta_2^{\mathrm{BU}}\rangle
\end{pmatrix},
\]
with $S = \begin{pmatrix}
    1 & 1 \\
    1 & 0 \\
    0 & 1
\end{pmatrix}$ and $X_t = (X_{1,t} \mid X_{2,t})$.

The \textit{Minimum Trace (MinT)} approach is an alternative update method that replaces the update matrix $S (S^\top S)^{-1} S^T$ with
$S (J-JWU(U^\top W U)^{-1}U^\top)$,
$J = \begin{pmatrix}
    0 & 1 & 0\\
    0 & 0 & 1 
\end{pmatrix}$,
$W$ the $3 \times 3$ covariance matrix of the prediction errors on the training data, and
$U = \begin{pmatrix}
    -1 &1 & 1
\end{pmatrix}^\top$ \citep{Wickramasuriya2019optimal}.
This approach extends the linear projection onto $\mathrm{Im}(S)$ and better accounts for correlations in the noise of the time series. 
Finally, we apply the WeaKL-BU estimator \eqref{eq:pikl-bu} with $M=0$, $\phi_1(x) = x$, $\phi_2(x) = x$, and $\Lambda = \mathrm{Diag}(1,1, \lambda)$, where $\lambda > 0$ is a hyperparameter that controls the penalty on the joint prediction $Y_{1} + Y_{2}$. 
It minimizes the empirical loss
\[
L(\theta_1, \theta_2) = \frac{1}{n}\sum_{i=1}^n |\langle X_{1,i}, \theta_1\rangle- Y_{1,i}|^2 + |\langle X_{2,i}, \theta_2\rangle-Y_{2,i}|^2 + \lambda |\langle X_{1,i}, \theta_1\rangle + \langle X_{1,i}, \theta_2\rangle- Y_{1,i}-Y_{2,i}|^2,
\]
In the experiments, we set $\lambda = \sigma_2^{-2}$ for simplicity, although it can be easily learned by cross-validation.
\begin{figure}
    \centering
    \includegraphics[width=0.4\linewidth]{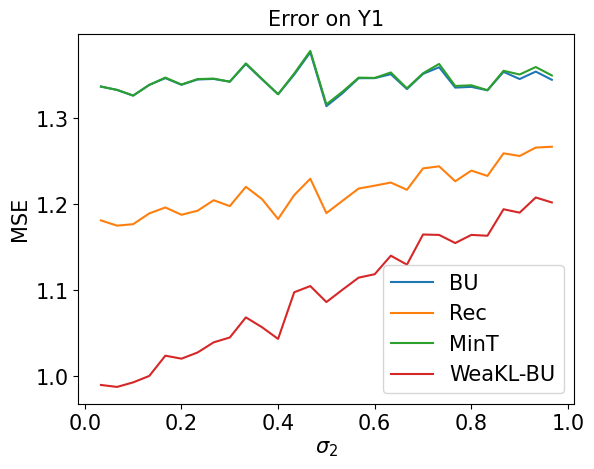}
    \includegraphics[width=0.4\linewidth]{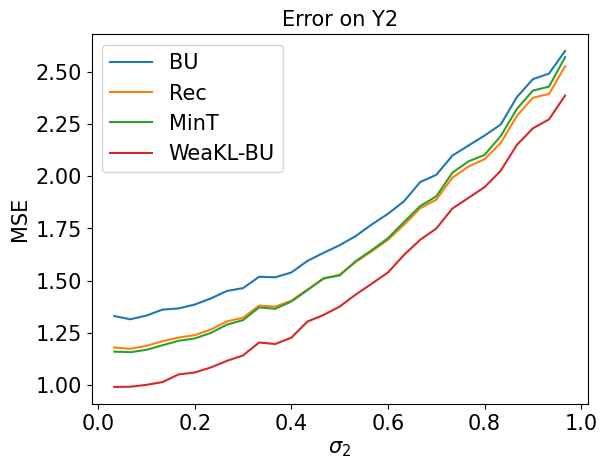}
    \includegraphics[width=0.4\linewidth]{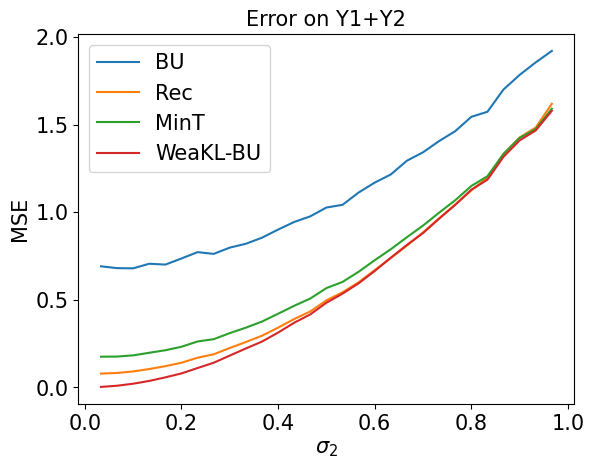}
    \includegraphics[width=0.4\linewidth]{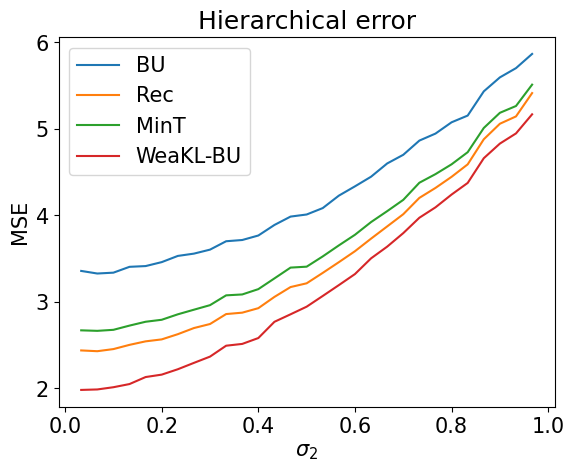}
    \caption{Hierarchical forecasting performance with $2d/n = 0.5$.}
    \label{fig:hier}
\end{figure}

\paragraph{Monte Carlo experiment.} To compare the performance of the different methods, we perform a Monte Carlo experiment. Since linear regression is invariant under multiplication by a constant, we set $\sigma_1=1$ without loss of generality. 
Since $\sigma_2 \leq \sigma_1$, we allow $\sigma_2$ to vary from $0$ to $1$. For each value of $\sigma_2$, we run $1000$ Monte Carlo simulations, where each simulation uses $n = 80$ training samples and $\ell = 20$ test samples. In each Monte Carlo run, we independently draw $\theta_1 \sim \mathcal N(0, I_d)$, $\theta_2 \sim \mathcal N(0, I_d)$, $X_{1,i} \sim \mathcal N(0, I_{ d})$, $X_{2,i} \sim \mathcal N(0, I_{d})$, $\varepsilon_{1,i} \sim \mathcal N(0, 1)$, and $\varepsilon_{2,i} \sim \mathcal N(0, \sigma_2^2)$, where $1 \leq i \leq n$. 
Note that, on the one hand, the $L^2$ error of an OLS regression on $Y_1 + Y_2$ is $\sigma_2^2 (1 + 2d/n)$, while on the other hand, the minimum possible $L^2$ error when fitting  $Y_1 + Y_2$ is $\sigma_2^2$.
Thus,  a large $2d/n$ is necessary to observe the benefits of hierarchical prediction. To achieve this, we set $d = 20$, resulting in $2d/n = 0.5$.

The models are trained on the $n$ training data points, and their performance is evaluated on the $\ell$ test data points using the mean squared error (MSE). Given any estimator $(\hat{Y}_1$, $\hat{Y}_2$, $\hat{Y}_3)$ of $(Y_1, Y_2, Y_1+Y_2)$, we compute the error $\ell^{-1}\sum_{j=1}^\ell| Y_{1, n+j}- \hat Y_{1, n+j}|^2$ on $Y_1$, the error $\ell^{-1}\sum_{j=1}^\ell| Y_{2, n+j}- \hat Y_{2, n+j}|^2$ on $Y_2$, and the error $\ell^{-1}\sum_{j=1}^\ell| Y_{1, n+j} + Y_{2, n+j}- \hat Y_{3, n+j}|^2$ on $Y_1 + Y_2$. 
The hierarchical error is defined as the sum of these three MSEs, which are visualized in Figure \ref{fig:hier}. 

\paragraph{Results.} Figure \ref{fig:hier} clearly shows that all hierarchical models (Rec, MinT, and WeaKL) outperform the naive bottom-up model for all four MSE metrics. Among them, our WeaKL consistently emerges as the best performing model, achieving superior results for all values of $\sigma_2$. Our WeaKL delivers gains ranging from $10\%$ to $50\%$ over the bottom-up model, solidifying its effectiveness in the benchmark.

The strong performance of WeaKL can be attributed to its approach, which goes beyond simply computing the best linear combination of linear experts to minimize the hierarchical loss, as reconciliation methods typically do. Instead, WeaKL directly optimizes the weights $\theta_1$ and $\theta_2$ to minimize the hierarchical loss.
Another way to interpret this is that when the initial forecasts are suboptimal, reconciliation methods aim to find a better combination of those forecasts, but do so without adjusting their underlying weights. In contrast, the WeaKL approach explicitly recalibrates these weights, resulting in a more accurate and adaptive hierarchical forecast.

\paragraph{Extension to the over-parameterized limit.} Another advantage of WeakL is that it also works for $d$ such that $2n \geq 2d \geq n$. 
In this context, the Rec and MinT algorithms cannot be computed because the OLS regression of $\mathbb Y$ on $\mathbb X$ is overparameterized ($2d$ features but only $n$ data points). 
To study the performance of the benchmark in the $n \simeq d$ limit, we repeated the same Monte Carlo experiment, but with $d = 38$, resulting in $d/n = 0.95$.
\begin{figure}
    \centering
    \includegraphics[width=0.4\linewidth]{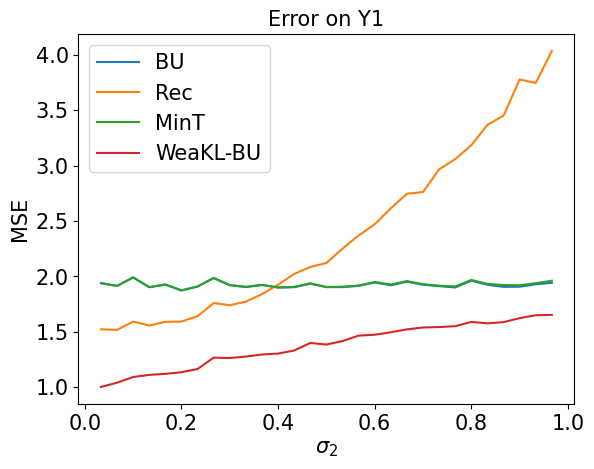}
    \includegraphics[width=0.4\linewidth]{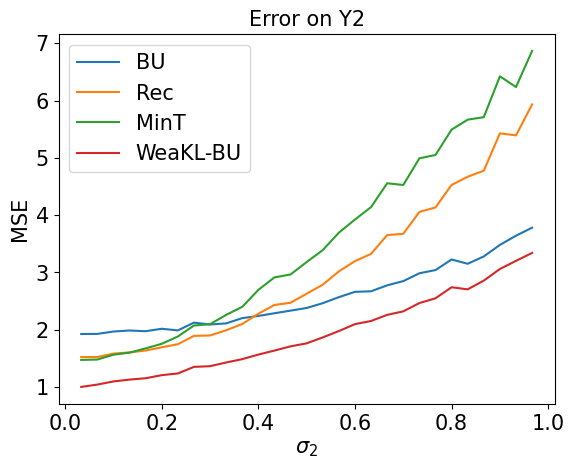}
    \includegraphics[width=0.4\linewidth]{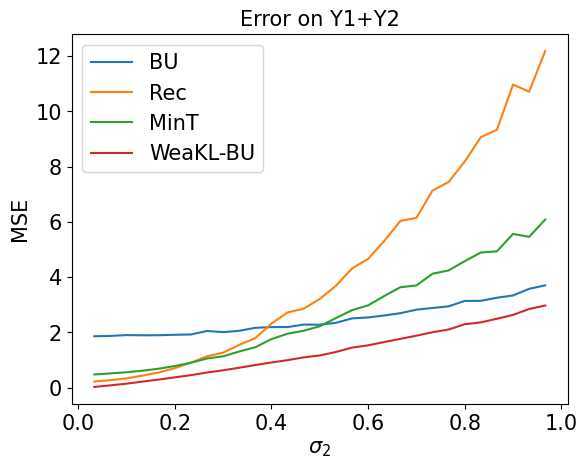}
    \includegraphics[width=0.4\linewidth]{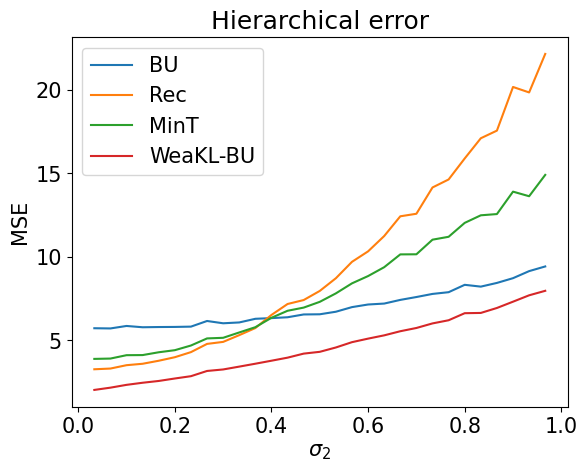}
    \caption{Hierarchical forecasting performance with $2d/n = 0.95$.}
    \label{fig:hier_2}
\end{figure}
The MSEs of the methods are shown in Figure~\ref{fig:hier_2}. These results further confirm the superiority of the WeaKL approach in the overparameterized regime. 
Note that such overparameterized situations are common in hierarchical forecasting.  
For example, forecasting an aggregate index-such as electricity demand, tourism, or food consumption-at the national level using city-level data across $d \gg 1$ cities (e.g., local temperatures) often leads to an overparameterized model.

\paragraph{Extension to non-linear regressions.} For simplicity, our experiments have focused on linear regressions. However, it is important to note that the hierarchical WeaKL can be applied to nonlinear regressions using exactly the same formulas. Specifically, in cases where $Y_1 = f_1(X_1) + \varepsilon_1$ and $Y_2 = f_2(X_2) - \varepsilon_1 + \varepsilon_2$, where $f_1$ and $f_2$ represent nonlinear functions, the WeaKL approach remains valid. This is because the connection to the linear case is straightforward: the WeaKL essentially performs a linear regression on the Fourier coefficients of $X_1$ and $X_2$, seamlessly extending its applicability to nonlinear settings.

\section{Experiments}
This appendix provides comprehensive details on the use cases discussed in the main text.
Appendix~\ref{sec:tuning} describes our hyperparameter tuning technique.
Appendix~\ref{sec:block-bootstrap} explains how we evaluate uncertainties.
Appendix~\ref{sec:half-hour} outlines our approach to handling sampling frequency in electricity demand forecasting applications.
Appendix~\ref{sec:case_study1} details the models used in Use case 1, while Appendix~\ref{sec:sobriety} focuses on Use case $2$, and Appendix~\ref{sec:hierarchical_details} covers the tourism demand forecasting use case.

\subsection{Hyperparameter tuning}
\label{sec:tuning}
\paragraph{Hyperparameter tuning of the additive WeaKL.} Consider a WeaKL additive model
\begin{align*}
    f_{\theta}(X_t) &=  \langle \phi_{1,1}(X_{1,t}), \theta_{1,1}\rangle + \cdots + \langle \phi_{1,d_1}(X_{d_1,t}), \theta_{1,d_1}\rangle,
\end{align*}
where the type (linear, nonlinear, or categorical) of the effects are specified. Thus, as detailed in Section~\ref{sec:shape},
\begin{itemize}
    \item[$(i)$] If the effect $\langle \phi_{1,\ell}(X_{\ell,t}), \theta_{1,j}\rangle$ is assumed to be linear, then $\phi_{1,j}(X_{\ell,t}) = X_{\ell,t}$,
    \item[$(ii)$] If the effect $\langle \phi_{1,\ell}(X_{\ell,t}), \theta_{1,\ell}\rangle$ is assumed to be  nonlinear, then $\phi_{1,\ell}$ is a Fourier map with $2m_\ell +1$ Fourier modes,
    \item[$(iii)$] If the effect $\langle \phi_{1,\ell}(X_{\ell,t}), \theta_{1,\ell}\rangle$ is assumed to be categorical with values in $E$, then $\phi_{1,\ell}$ is a Fourier map with $2\lfloor |E|/2\rfloor+1$ Fourier modes.
\end{itemize}
We let $\mathbf{m} = \{m_\ell\mid \hbox{the effect $\langle \phi_{1,\ell}(X_{\ell,t}), \theta_{1,\ell}\rangle$ is nonlinear}\}$ be the concatenation of the numbers of Fourier modes of the nonlinear effects.
The goal of hyperparameter tuning is to find the best set of hyperparameters $\lambda = (\lambda_1, \hdots, \lambda_{d_1})$ and 
$\mathbf{m}$ for the empirical risk \eqref{eq:weaklGAM} of the additive WeaKL.

To do so, we split the data into three sets: a training set, then a validation set, and finally a test set.
These three sets must be disjoint to avoid overfitting, and the test set is the dataset on which the final performance of the method will be evaluated.
The sets should be chosen so that the distribution of $(X,Y)$ on the validation set resembles as much as possible the distribution of $(X,Y)$ on the test set.

We consider a list of potential candidates for the optimal set of hyperparameters $(\lambda ,\mathbf{m})_{\mathrm{opt}}$. 
Since we have no prior knowledge about $(\lambda, \mathbf{m})$, we chose this list to be a grid of parameters. For each element $(\lambda, \mathbf{m})$ in the grid, we compute the minimizer $\hat \theta(\lambda, \mathbf{m})$ of the loss \eqref{eq:weaklGAM} over the training period. Then, given $\hat \theta(\lambda, \mathbf{m})$, we compute the mean squared error (MSE) of $f_{\hat \theta(\lambda, \mathbf{m})}$ over the validation period. This procedure is commonly referred to as grid search. The resulting estimate of the optimal hyperparameters $(\lambda, \mathbf{m})_{\mathrm{opt}}$ corresponds to the values of  $(\lambda, \mathbf{m})$ that minimize the MSE of  $f_{\hat \theta(\lambda, \mathbf{m})}$ over the validation period. The performance of the additive WeaKL is then assessed based on the performance of $f_{\hat \theta(\lambda, \mathbf{m})_{\mathrm{opt}}}$ on the test set.
\paragraph{Hyperparameter tuning of the online WeaKL.} Consider an online WeaKL
\begin{equation*}
    f_\theta(t, x_1, \hdots, x_{d_1}) = h_{\theta_0}(t)+ \sum_{\ell=1}^{d_1} (1+ h_{\theta_\ell}(t))  \hat g_\ell(x_\ell),
\end{equation*}
where the effects $\hat g_\ell$ are known, and the updates $h_{\theta_\ell}(t) = \langle \phi(t), \theta_\ell\rangle$ are such that $\phi$ is the Fourier map $\phi(t) =(\exp(i k t/2))_{-m_j\leq k \leq m_j}^\top$, with $m_j \in \mathbb N^\star$.
We let $\mathbf{m} = \{m_j\mid 0\leq j \leq d_1\}$ be the concatenation of the numbers of Fourier modes.
The goal of hyperparameter tuning is to find the best set of hyperparameters $\lambda = (\lambda_0, \hdots, \lambda_{d_1})$ and 
$\mathbf{m}$ for the empirical risk \eqref{eq:risk_online} of the online WeaKL.

To do so, we split the data into three sets: a training set, then a validation set, and finally a test set.
This three sets must be disjoint to avoid overfitting.
Moreover, the training set and the validation set must be disjoint from the data used to learn the effects $\hat g_\ell$.
The test set must be the set on which the final performance of the method will be evaluated. 
The sets should be chosen so that the distribution of $(X,Y)$ on the validation set resembles as much as possible the distribution of $(X,Y)$ on the test set.
Similarly to the hyperparameter tuning of the additive WeaKL, we then consider a list of potential candidates for the optimal hyperparameter $(\lambda ,\mathbf{m})_{\mathrm{opt}}$, which can be a grid.
Then, we compute the minimizer $\hat \theta(\lambda, \mathbf{m})$ of the loss \eqref{eq:risk_online} on the training period, and the resulting estimation of $(\lambda, \mathbf{m})_{\mathrm{opt}}$ is the set of hyperparameters $(\lambda, \mathbf{m})$ such that the MSE of $f_{\hat \theta(\lambda, \mathbf{m})}$ on the validation period is minimal.
The performance of the online WeaKL is thus measured by the performance of $f_{\hat \theta(\lambda, \mathbf{m})_{\mathrm{opt}}}$ on the test set.

\subsection{Block bootstrap methods}
\label{sec:block-bootstrap}
\paragraph{Evaluating uncertainties with block bootstrap.}
The purpose of this paragraph is to provide theoretical tools for evaluating the performance of time series estimators.
Formally, given a test period $\{t_1, \hdots, t_n\}$, a  target time series $(Y_{t_j})_{1\leq j \leq n}$, and an estimator $(\hat Y_{t_j})_{1\leq j \leq n}$ of $Y$, the goal is to construct confidence intervals that quantify how far $\mathrm{RMSE}_{n} = (n^{-1}\sum_{j=1}^n |\hat Y_{t_j} - Y_{t_j}|^2)^{1/2}$ deviates from its expectation 
$\mathrm{RMSE} = (\mathbb{E} |\hat Y_{t_1} - Y_{t_1}|^2)^{1/2}$, and  how far  $\mathrm{MAPE}_n = n^{-1}\sum_{j=1}^n |\hat Y_{t_j} - Y_{t_j}| |Y_{t_j}|^{-1}$ deviates from its expectation $\mathrm{MAPE} = \mathbb{E}( |\hat Y_{t_1} - Y_{t_1}| |Y_{t_1}|^{-1})$.
Here, we assume that $Y$ and $\hat{Y}$ are strongly stationary, meaning their distributions remain constant over time.
Constructing such confidence intervals is non-trivial because the observations $Y_{t_j}$ in the time series $Y$ are correlated, preventing the direct application of the central limit theorem.
The block bootstrap algorithm is specifically designed to address this challenge and is defined as follows.

Consider a sequence $Z_{t_1},Z_{t_2},\hdots,Z_{t_n}$ such that the quantity of interest can be expressed as $g(\mathbb{E}(Z_{t_1}))$, for some function $g$. 
This quantity is estimated by $g(\bar Z_n)$, where $\bar Z_n = n^{-1}\sum_{j=1}^n Z_{t_j}$ is the empirical mean of the sequence.
For example, $\mathrm{RMSE} = g(\mathbb E(Z_{t_1}))$ and $\mathrm{RMSE}_n = g(\bar Z_n)$ for $g(x) = x^{1/2}$ and $Z_{t_j}=(Y_{t_j}-\hat{Y_{t_j}})^2$, while $\mathrm{MAPE} = g(\mathbb E(Z_{t_1}))$ and  $\mathrm{MAPE}_n = g(\bar Z_n)$  for $g(x) = x$ and $Z_{t_j}=|\hat Y_{t_j} - Y_{t_j}| |Y_{t_j}|^{-1}$.
The goal of the block bootstrap algorithm is to estimate the distribution of $g(\bar Z_n)$.

Given a length $\ell \in \mathbb N^\star$ and a starting time $t_j$, we say that $(Z_{t_j}, \hdots, Z_{t_{j+\ell-1}})\in \mathbb R^\ell$ is a block of length $\ell$ starting at $t_j$. We draw  $b = \lfloor n/\ell\rfloor +1$ blocks of length $\ell$ uniformly from the sequence  $(Z_{t_1}, Z_{t_2}, \dots, Z_{t_n})$ and then concatenate these blocks to form the sequence $Z^\ast = (Z_1^\ast, Z_2^\ast, \dots, Z_{b\ell}^\ast)$.
Thus, $Z^\ast$ is a resampled version of $Z$  obtained with replacement.

For convenience, we consider only the first $n$ values of $Z^\ast$ and compute the bootstrap version of the empirical mean: $\bar{Z}^\ast_n=\frac{1}{n}\sum_{j=1}^nZ^\ast_j$. By repeatedly resampling the $b$ blocks and generating multiple instances of $\bar{Z}^\ast_n$, the resulting distribution of $\bar{Z}^\ast_n$  provides a reliable estimate of the distribution of $\bar{Z}_n$.
In particular, under general assumptions about the decay of the autocovariance function of $Z$, choosing $\ell = \lfloor n^{1/4} \rfloor$ leads to 
\[\sup_{x\in\mathbb{R}}|\mathbb{P}(T^{\ast}_n\leq x\mid Z_{t_1},\hdots,Z_{t_n})-\mathbb{P}(T_n\leq x)| = O_{n\to \infty}(n^{-3/4}),\]  where $T^{\ast}_n=\sqrt n(\bar{Z}^\ast_n-\mathbb{E}(\bar{Z}^\ast_n\mid Z_{t_1},\hdots,Z_{t_n}))$ and $T_n=\sqrt n(\bar{Z}_n-\mathbb{E}(Z_{t_1}))$ \citep[see, e.g.][Theorem 6.7]{lahiri2013resampling}.
Note that this convergence rate of $n^{-3/4}$ is actually quite fast, as even if the $Z_{t_j}$  were i.i.d., the empirical mean $\bar{Z}_n$  would only converge to a normal distribution at a rate of $n^{-1/2}$
(by the Berry-Esseen theorem). This implies that the block bootstrap method estimates the distribution of $\bar{Z}_n$ faster than $\bar{Z}_n$  itself converges to its Gaussian limit.

The choice of $\ell$ plays a crucial role in this method.
For instance, setting $\ell = 1$ leads to an underestimation of the variance of $\bar{Z}_n$  when the $Z_{t_j}$ are correlated \citep[see, e.g.][Corollary 2.1]{lahiri2013resampling}.
In addition, block resampling introduces a bias, as $Z_{t_n}$ belongs to only a single block and is therefore less likely to be resampled than  $Z_{t_{\lfloor n/2\rfloor}}$.
This explains why $\mathbb{E}(\bar{Z}^\ast_n \mid Z_{t_1}, \dots, Z_{t_n}) \neq \bar{Z}_n$.
To address both problems, \citet{politis1994stationary} introduced the stationary bootstrap, where the block length $\ell$ varies and follows a geometric distribution.

\paragraph{Comparing estimators with block bootstrap.}
Given two stationary estimators $\hat Y^1$ and $\hat Y^2$ of $Y$,  the goal is to develop a test of level $\alpha \in [0,1]$ for the hypothesis $H_0: \mathbb E|\hat Y^1_t-Y_t| = \mathbb E|\hat Y^2_t-Y_t|$. Using the previous paragraph, such a test could be implemented by estimating two confidence intervals $I_1$ and $I_2$ for $\mathbb E|\hat Y^1_t-Y_t|$ and $\mathbb E|\hat Y^2_t-Y_t|$ at level $\alpha/2$ using block bootstrap, and then rejecting $H_0$  if  $I_1 \cap I_2 = \emptyset$. However, this approach tends to be conservative, potentially reducing the power of the test when assessing whether one estimator is significantly better than the other.  

To create a more powerful test, \citet{Messner2020evaluation} and \citet{Farrokhabadi2022day} suggest relying on the MAE skill score, which is defined by \[\mathrm{Skill}=1-\frac{\mathrm{MAE_{1}}}{\mathrm{MAE_{2}}},\]
where $\mathrm{MAE_{1}}$ and $\mathrm{MAE_{2}}$ are the mean average errors of $\hat{Y}^{1}$ and $\hat{Y}^2$, respectively. 
Note that $\mathrm{Skill}= (\mathrm{MAE}_{2} - 
\mathrm{MAE_{1}})/\mathrm{MAE_{2}}$
is the relative distance between the two $\mathrm{MAE}$s. Thus, $\hat{Y}^{1}$ is significantly better than $\hat{Y}^2$ if $\mathrm{Skill}$ is significantly positive.
A confidence interval for $\mathrm{Skill}$ can be obtained  by block bootstrap.
Indeed, consider the time series $Z$ defined as $Z_{t_j}=(|\hat{Y}^{1}_{t_j}-Y^1_{t_j}|,|\hat{Y}^{2}_{t_j}-Y^2_{t_j}|)$, and let $g(x,y)=1-x/y$. We use the block bootstrap method over this sequence to estimate $g(\mathbb E(Z))$ by generating different samples of $\mathrm{MAE_{1}}$ and $\mathrm{MAE_{2}}$. In particular, in Appendix~\ref{sec:case_study1}, $\hat{Y}^{1}$ corresponds to WeakL, while $\hat{Y}^{2}$ is the estimator of the winning team of the IEEE competition.

\subsection{Half-hour frequency}
\label{sec:half-hour}

Short-term electricity demand forecasts are often estimated with a half-hour frequency, meaning that the objective is to predict electricity demand every $30$ minutes during the test period. This applies to both Use case 1 and Use case 2.
There are two common approaches to handling this frequency in forecasting models. One approach is to include the half-hour of the day as a feature in the models. The alternative, which yields better performance, is to train a separate model for each half-hour, resulting in $48$ distinct models. This superiority arises because the relationship between electricity demand and conventional features (such as temperature and calendar effects) varies significantly across different times of the day. For instance, electricity demand remains stable at night but fluctuates considerably during the day. This variability justifies treating the forecasting problem at each half-hour as an independent learning task, leading to $48$ separate models. Consequently, in both use cases, all models discussed in this paper---including WeaKL, as well as those from \citet{vilmarest2022state} and \citet{doumeche2023human}---are trained separately for each of the $48$ half-hours, using identical formulas and architectures. This results in $48$ distinct sets of model weights. 
For simplicity, and since the only consequence of this preprocessing step is to split the learning data into $48$ independent groups, this distinction is omitted from the equations.

\subsection{Precisions on the Use case 1 on the IEEE DataPort Competition on Day-Ahead Electricity Load Forecasting}
\label{sec:case_study1}
In this appendix, we provide additional details on the two WeaKLs used in the benchmark for Use case 1 of the IEEE DataPort Competition on Day-Ahead Electricity Load Forecasting. The first model is a direct adaptation of the GAM-Viking model from \citet{vilmarest2022state} into the WeaKL framework. The second model is a WeaKL where the effects are learned through hyperparameter tuning.
\paragraph{Direct translation of the GAM-Viking model into the WeaKL framework.}

To build their model, \citet{vilmarest2022state} consider four primary models: an autoregressive model (AR), a linear regression model, a GAM, and a multi-layer perceptron (MLP). 
These models are initially trained on data from $18$ March $2017$ to $1$ January $2020$. 
Their weights are then updated using the Viking algorithm starting from $1$ March $2020$ \citep[][Table~3]{vilmarest2022state}. 
The parameters of the Viking algorithm were manually selected by the authors based on performance monitoring over the $2020$–$2021$ period \citep[][Figure7]{vilmarest2022state}. 
To further refine the forecasts, the model errors are corrected using an autoregressive model, which they called the intraday correction and implemented as a static Kalman filter. The final forecast is obtained by online aggregation of all models, meaning that the predictions from different models are combined in a linear combination that evolves over time. The weights of this aggregation are learned using the ML-Poly algorithm from the \texttt{opera} package \citep{gaillard2016opera}, trained over the period $1$ July $2020$ to $18$ January $2021$. 
The test period spans from $18$ January $2021$ to $17$ February $2021$. 
During this period, the aggregated model achieves a MAE of $10.9$~GW, while the Viking-updated GAM model alone yields an MAE of $12.7$~GW.

Here, to ensure a fair comparison between our WeaKL framework and the GAM-Viking model of \citet{vilmarest2022state}, we replace their GAM-Viking with our online WeaKL in their aggregation. Our additive WeaKL model is therefore a direct translation of their offline GAM formulation into the WeaKL framework. Specifically, we consider the additive WeaKL based on the features $X = (\mathrm{DoW}, \mathrm{FTemps95_{corr1}}, \mathrm{Load_1}, \mathrm{Load_7}, \mathrm{ToY}, t)$ corresponding to
\begin{equation*}
\begin{split} 
Y_t =& g_1^\star(\mathrm{DoW}_t)
   +g_2^\star(\mathrm{FTemps95_{corr1}}_t)
   + g_3^\star(\mathrm{Load_1}_t) +g_4^\star(\mathrm{Load_7}_t)+g_5^\star(\mathrm{ToY}_t)
      +g_6^\star(t) +\varepsilon_t,
\end{split}
\end{equation*}
where $g_1^\star$ is categorical with 7 values, $g_2^\star$ and $g_6^\star$  are linear,  $g_3^\star$, $g_4^\star$, and $g_5^\star$ are nonlinear.

$\mathrm{FTemps95_{corr1}}$ is a smoothed version of the temperature, while the other features remain the same as those used in Use case 2. The weights of the additive WeaKL model are determined using the hyperparameter selection technique described in Appendix~\ref{sec:tuning}. 
The training period spans from $18$ March $2017$ to $1$ November $2019$, while the validation period extends from $1$ November $2019$ to $1$ January $2020$. 
During this grid search, the performance of $250,047$ sets of hyperparameters $(\lambda, \mathbf m)\in \mathbb R^7\times \mathbb R^3$ is evaluated in less than a minute using a standard GPU (Nvidia L4 GPU, $24$~GB RAM, $30.3$ teraFLOPs for Float32). 
Notably, this optimization period exactly matches the training period of the primary models in \citet{vilmarest2022state}, ensuring a fair comparison between the two approaches.

Then, we run an online WeaKL, where the effects $\hat g_\ell$, $1\leq \ell \leq 7$, are inherited directly from the previously trained additive WeaKL. 
The weights of this online WeaKL are determined using the hyperparameter selection technique described in Appendix~\ref{sec:tuning}. 
The training period extends from $1$ February $2020$ to $18$ November $2020$, while the validation period extends from $18$ November $2020$ to $18$ January $2021$, immediately preceding the final test period to ensure optimal adaptation. During this grid search, we evaluate $625$ sets of hyperparameters $(\lambda, \mathbf m)\in \mathbb R^6\times \mathbb R^6$ in less than a minute using a standard GPU. 
Since $t$ is already included as a feature, the function  $h_0^\ast$ in Equation~\eqref{eq:model} is not required in this setting.

Finally, we evaluate the performance of our additive WeaKL (denoted as $\hbox{WeaKL}_{\mathrm{+}}$), our additive WeaKL followed by intraday correction ($\hbox{WeaKL}_{+,\mathrm{intra}}$), our online WeaKL ($\hbox{WeaKL}_{\mathrm{on}}$), our online WeaKL with intraday correction ($\hbox{WeaKL}_{\mathrm{on, intra}}$), and an aggregated model based on \citet{vilmarest2022state}, where the GAM and GAM-Viking models are replaced by our additive and online WeaKL models ($\hbox{WeaKL}_{\mathrm{agg}}$). The test period remains consistent with \citet{vilmarest2022state}, spanning from $18$ January $2021$ to $17$ February $2021$. Their performance results are presented in Table~\ref{tab:gam-vik} and compared to their corresponding translations within the GAM-Viking framework.
\begin{table}
    \centering
    \caption{Comparing GAM-Viking with its direct translation in the WeaKL framework on the final test period}
    \begin{tabular}{|c|ccccc|}
        \hline
        Model GAM &  $\hbox{GAM}_{\mathrm{+}}$ & $\hbox{GAM}_{+,\mathrm{intra}}$ & $\hbox{GAM}_{\mathrm{on}}$ & $\hbox{GAM}_{\mathrm{on, intra}}$ & $\hbox{GAM}_{\mathrm{agg}}$\\
        \hline
         MAE (GW) & 48.3 & 22.7 & 13.2 & 12.7 & 10.9\\
         \hline
         \hline
        Model WeaKL &  $\hbox{WeaKL}_{\mathrm{+}}$ & $\hbox{WeaKL}_{+,\mathrm{intra}}$ & $\hbox{WeaKL}_{\mathrm{on}}$ & $\hbox{WeaKL}_{\mathrm{on, intra}}$ & $\hbox{WeaKL}_{\mathrm{agg}}$\\
        \hline
         MAE (GW) & 58.0 & 23.4 & 11.2 & 11.3 & 10.5\\
         \hline
    \end{tabular}
    \label{tab:gam-vik}
\end{table}
Thus, $\hbox{GAM}_{\mathrm{+}}$ refers to the offline GAM, while $\hbox{GAM}_{+,\mathrm{intra}}$ corresponds to the offline GAM with an intraday correction. Similarly, $\hbox{GAM}_{\mathrm{on}}$ represents the GAM-Viking model, and $\hbox{GAM}_{\mathrm{on, intra}}$ denotes the GAM-Viking model with an intraday correction. Finally, $\hbox{GAM}_{\mathrm{agg}}$ corresponds to the final model proposed by \citet{vilmarest2022state}.

The performance $\hbox{GAM}_{\mathrm{+}}$, $\hbox{GAM}_{\mathrm{+, intra}}$, $\hbox{WeaKL}_{\mathrm{+}}$, and $\hbox{WeaKL}_{\mathrm{+, intra}}$ in Table~\ref{tab:gam-vik} alone is not very meaningful because the distribution of electricity demand differs between the training and test periods. To address this, Table~\ref{tab:gam-vik-norm} presents a comparison of the same algorithms, trained on the same period but evaluated on a test period spanning from $1$ January $2020$ to $1$ March $2020$. In this stationary period, WeaKL outperforms the GAMs.
\begin{table}
    \centering
    \caption{Comparing GAM with its direct translation in the WeaKL framework on a stationary test period.}
    \begin{tabular}{|c|cccc|}
        \hline
        Model &  $\hbox{GAM}_{\mathrm{+}}$ &  $\hbox{WeaKL}_{\mathrm{+}}$ & $\hbox{GAM}_{+,\mathrm{intra}}$ & $\hbox{WeaKL}_{+,\mathrm{intra}}$\\
        \hline
         MAE (GW) & 20.7 & 19.1 & 19.3 & 19.2\\
         \hline
    \end{tabular}
    \label{tab:gam-vik-norm}
\end{table}

Moreover, in Table~\ref{tab:gam-vik}, the online WeaKLs clearly outperform the GAM-Viking models, achieving a reduction in MAE of more than $10\%$.
As a result, replacing the GAM-Viking models in the aggregation leads to improved overall performance. Notably, the WeaKLs are direct translations of the GAM-Viking models, meaning that the performance gains are due solely to model optimization and not to any structural changes.

\paragraph{Pure WeaKL.}
In addition, we trained an additive WeaKL using a different set of variables than those in the GAM model, aiming to identify an optimal configuration. Specifically, we consider the additive WeaKL with 
\begin{align*}
    X= (&\mathrm{FcloudCover\_corr1},\mathrm{Load1D},\mathrm{Load1W},\mathrm{DayType},\mathrm{FTemperature\_corr1},\\  &\mathrm{FWindDirection}, \mathrm{FTemps95\_corr1},\mathrm{Toy},\mathrm{t}),
\end{align*} where 
\begin{itemize}
    \item[$(i)$] the effects of $\mathrm{FclouCover\_corr1}$, $\mathrm{Load1D}$, and $\mathrm{Load1W}$ are nonlinear,
    \item[$(ii)$] the effect of $\mathrm{DayType}$ is categorical with 7 values,
    \item[$(iii)$] the remaining effects are linear.
\end{itemize}
This model is trained using the hyperparameter tuning process detailed in Appendix~\ref{sec:tuning}, with the training period spanning from $18$ March $2017$ to $1$ January $2020$, and validation starting from $1$ October $2019$.
Next, we fit an online WeaKL model, with hyperparameters tuned using a training period from $1$ March $2020$ to $18$ November $2020$ and a validation period extending until $18$ January $2021$.

To verify that our pure WeaKL model achieves a significantly lower error than the best model from the IEEE competition, we estimate the $\mathrm{MAE}$ skill score by comparing our pure WeaKL to the model proposed by \citet{vilmarest2022state}. To achieve this, we follow the procedure detailed in Appendix~\ref{sec:block-bootstrap}, using block bootstrap with a block length of $\ell = 24$ and $3000$ resamples to estimate the distribution of the $\mathrm{MAE}$ skill score, $\mathrm{Skill}$. Here, $\hat{Y}^1$ represents the WeaKL, while $\hat{Y}^2$ corresponds to the estimator from \citet{vilmarest2022state}. To evaluate the performance difference, we estimate the standard deviation $\sigma_n$ of $\mathrm{Skill}_n$ and construct an asymptotic one-sided confidence interval for $\mathrm{Skill}$. Specifically, we define
$\mathrm{Skill}_n = 1 - (\sum_{j=1}^n |\hat Y^1_{t_j} - Y_{t_j}| )/(\sum_{j=1}^n |\hat Y^2_{t_j} - Y_{t_j}|)$
and consider the confidence interval $[\mathrm{Skill}_n - 1.28 \sigma_n, +\infty[$, which corresponds to a confidence level of $\alpha = 0.1$. The resulting interval, $[0.007, +\infty[$, indicates that the $\mathrm{Skill}$ score is positive with at least $90\%$ probability. Consequently, with at least $90\%$ probability, the WeaKL  chieves a lower $\mathrm{MAE}$ than the best model from the IEEE competition.

\subsection{Precision on the use Use case 2 on forecasting the French electricity load during the energy crisis}
\label{sec:sobriety}
This appendix provides detailed information on the additive WeaKL and the online WeaKL used in Use case 2, which focuses on forecasting the French electricity load during the energy crisis.
\paragraph{Additive WeaKL.} As detailed in the main text, the additive WeaKL is built using the following features:
\[X =(\mathrm{Load}_1, \mathrm{Load}_7, \mathrm{Temp}, \mathrm{Temp}_{950},  \mathrm{Temp}_{\mathrm{max 950}}, \mathrm{Temp}_{\mathrm{min 950}}, \mathrm{ToY},  \mathrm{DoW}, \mathrm{Holiday},t).
\]

The effects of $\mathrm{Load}_1$, $\mathrm{Load}_7$, and $t$ are modeled as linear. The effects of $\mathrm{Temp}$, $\mathrm{Temp}_{950}$,  $\mathrm{Temp}_{\mathrm{max 950}}$, $ \mathrm{Temp}_{\mathrm{min 950}}$, and $\mathrm{ToY}$ are modeled as nonlinear with $m=10$. The effects of $\mathrm{DoW}$ and $\mathrm{Holiday}$ are treated as categorical, with $|E| = 7$ and $|E| = 2$, respectively. The model weights are selected through hyperparameter tuning, as detailed in Appendix~\ref{sec:tuning}. The training period spans from $8$ January $2013$ to $1$ September $2021$, while the validation period covers $1$ September $2021$ to $1$ September $2022$. Notably, this is the exact same period used by \citet{doumeche2023human} to train the GAM. The objective of the hyperparameter tuning process is to determine the optimal values for $\lambda = (\lambda_1, \hdots, \lambda_{10}) \in (\mathbb R^+)^{10}$ and $\mathbf{m} = (m_3, m_4, m_5, m_6, m_7) \in (\mathbb N^\star)^5$ in \eqref{eq:weaklGAM}. As a result, the additive WeaKL model presented in Use case 2 is the outcome of this hyperparameter tuning process.

\paragraph{Online WeaKL.} Next, we train an online WeaKL to update the effects of the additive WeaKL. To achieve this, we apply the hyperparameter selection technique detailed in Appendix~\ref{sec:tuning}. The training period spans from $1$ February $2018$ to $1$ April $2020$, while the validation period extends from $1$ April $2020$ to $1$ June $2020$. These periods, although not directly contiguous to the test period, were specifically chosen because they overlap with the COVID-19 outbreaks. This is crucial, as it allows the model to learn from a nonstationary period. Moreover, since online models require daily updates, the online WeaKL is computationally more expensive than the additive WeaKL. The training period is set to two years and two months, striking a balance between computational efficiency and GPU memory usage. Using the parameters $(\lambda, \mathbf m)$ obtained from hyperparameter tuning, we then retrain the model in an online manner with data starting from $1$ July $2020$, ensuring that the rolling training period remains at two years and two months.

\paragraph{Error quantification.} Following the approach of \citet{doumeche2023human}, the standard deviations of the errors are estimated using stationary block bootstrap with a block length of $\ell = 48$ and 1000 resamples.

\paragraph{Model running times.} Below, we present the running times of various models in the experiment that includes holidays:
\begin{itemize}
\item  GAM: $20.3$ seconds. 
\item Static Kalman adaption: $1.7$ seconds.
\item Dynamic Kalman adaption: $48$ minutes, for an hyperparameter tuning of $10^4$ sets of hyperparameters \citep[see][II.A.2]{obst2021adaptive}.
\item Viking algorithm: $215$ seconds (in addition to training the Dynamic Kalman model).
\item Aggregation: $0.8$ seconds.
\item GAM boosting model: $6.6$ seconds.
\item Random forest model: $196$ seconds.
\item Random forest + bootstrap model: $34$ seconds.
\item Additive WeaKL: grid search of $1.6\times 10^5$ hyperparameters: $257$ seconds; training a single model: $2$ seconds.
\item Online WeaKL: grid search of $9.2\times 10^3$ hyperparameters: $114$ seconds; training a single model: $52$ seconds.
\end{itemize}
\subsection{Precisions on the use case on hierarchical forecasting of Australian domestic tourism with transfer learning}
\label{sec:hierarchical_details} The matrices $\Lambda$ for the WeaKL-BU, WeaKL-G, and WeaKL-T estimators are selected through hyperparameter tuning. Following the procedure detailed in Appendix~\ref{sec:tuning}, the dataset is divided into three subsets: training, validation, and test. The training set comprises the first $60\%$ of the data, the validation set the next $20\%$, and the test set the last $20\%$. 
The optimal matrix, $\Lambda_{\mathrm{opt}}$, is chosen from a set of candidates by identifying the estimator trained on the training set that achieves the lowest MSE on the validation set. The model is then retrained using both the training and validation sets with $\Lambda = \Lambda_{\mathrm{opt}}$, and its performance is evaluated on the test set. Given that $d_1 = 415 \times 24 = 19,920$, WeaKL involves matrices of size $d_1^2 \simeq 4\times 10^8$, requiring several gigabytes of RAM. Consequently, the grid search process is computationally expensive. For instance, in this experiment, the grid search over $1024$ hyperparameter sets for WeaKL-T takes approximately $45$ minutes.

\end{document}